\documentclass{article}
\usepackage{amsmath,amsbsy,amsfonts,amssymb,amsthm,color,dsfont}

\def\ddefloop#1{\ifx\ddefloop#1\else\ddef{#1}\expandafter\ddefloop\fi}

\def\ddef#1{\expandafter\def\csname b#1\endcsname{\ensuremath{\mathbf{#1}}}}
\ddefloop ABCDEFGHIJKLMNOPQRSTUVWXYZ\ddefloop

\def\ddef#1{\expandafter\def\csname bb#1\endcsname{\ensuremath{\mathbb{#1}}}}
\ddefloop ABCDEFGHIJKLMNOPQRSTUVWXYZ\ddefloop

\def\ddef#1{\expandafter\def\csname c#1\endcsname{\ensuremath{\mathcal{#1}}}}
\ddefloop ABCDEFGHIJKLMNOPQRSTUVWXYZ\ddefloop

\def\ddef#1{\expandafter\def\csname v#1\endcsname{\ensuremath{\boldsymbol{#1}}}}
\ddefloop ABCDEFGHIJKLMNOPQRSTUVWXYZabcdefghijklmnopqrstuvwxyz\ddefloop

\def\ddef#1{\expandafter\def\csname v#1\endcsname{\ensuremath{\boldsymbol{\csname #1\endcsname}}}}
\ddefloop {alpha}{beta}{gamma}{delta}{epsilon}{varepsilon}{zeta}{eta}{theta}{vartheta}{iota}{kappa}{lambda}{mu}{nu}{xi}{pi}{varpi}{rho}{varrho}{sigma}{varsigma}{tau}{upsilon}{phi}{varphi}{chi}{psi}{omega}{Gamma}{Delta}{Theta}{Lambda}{Xi}{Pi}{Sigma}{varSigma}{Upsilon}{Phi}{Psi}{Omega}\ddefloop

\newcommand\ind[1]{\ensuremath{\mathds{1}\{#1\}}}

\newtheorem{lemma}{Lemma}

\newtheorem{proposition}{Proposition}
\newtheorem{theorem}{Theorem}
\newtheorem{corollary}{Corollary}
\theoremstyle{remark}
\newtheorem{remark}{Remark}
\theoremstyle{definition}
\newtheorem{definition}{Definition}
\newtheorem{condition}{Condition}

\newcommand\parens[1]{\left( #1 \right)}

\newcommand\braces[1]{\left\{ #1 \right\}}

\usepackage{hyperref}
\usepackage{url}
\usepackage{cite}
\usepackage{fullpage}
\usepackage{algorithm}
\usepackage{algorithmic}
\usepackage{xspace}
\usepackage{authblk}
\usepackage[numbers]{natbib}

\newcommand\algAA{\ensuremath{\textsc{lmm}}\xspace}
\newcommand\algA{\ensuremath{\mathsf{A}}\xspace}
\newcommand\algS{\ensuremath{\mathsf{S}}\xspace}
\newcommand\algM{\ensuremath{\mathsf{M}}\xspace}
\newcommand\algMS{\ensuremath{\mathsf{MS}}\xspace}

\newcommand\EM{\ensuremath{\textsc{EM}}\xspace}

\newcommand\Lap{\ensuremath{\operatorname{Lap}}}

\newcommand\err{\ensuremath{\operatorname{err}}}
\newcommand\haterr{\ensuremath{\widehat{\operatorname{err}}}}

\newcommand{\order}[2]{#1^{(#2)}}

\title{%
  \mbox{The Large Margin Mechanism}
  \mbox{for Differentially Private Maximization}%
}

\author[1]{\mbox{Kamalika Chaudhuri}}
\author[2]{\mbox{Daniel Hsu}}
\author[1]{\mbox{Shuang Song}}
\affil[1]{University of California, San Diego}
\affil[2]{Columbia University}

\begin{document}

\maketitle
{\def\thefootnote{}%
  \footnotetext{E-mail:
  \texttt{kamalika@cs.ucsd.edu},
  \texttt{djhsu@cs.columbia.edu},
  \texttt{shs037@eng.ucsd.edu}}}

\begin{abstract}
  
A basic problem in the design of privacy-preserving algorithms is the
\emph{private maximization problem}: the goal is to pick an item from a
universe that (approximately) maximizes a data-dependent function, all under
the constraint of differential privacy. This problem has been used as a
sub-routine in many privacy-preserving algorithms for statistics and
machine-learning.

Previous algorithms for this problem are either range-dependent---i.e., their
utility diminishes with the size of the universe---or only apply to very
restricted function classes.  This work provides the first general-purpose,
range-independent algorithm for private maximization that guarantees
approximate differential privacy. Its applicability is demonstrated on two
fundamental tasks in data mining and machine learning.

\end{abstract}

\section{Introduction}
\label{sec:intro}

Differential privacy~\cite{DMNS06} is a cryptographically-motivated
definition of privacy that has recently gained significant attention
in the data mining and machine learning communities.
An algorithm for processing sensitive data enforces differential
privacy by ensuring that the likelihood of any outcome does not change
by much when a single individual's private data changes.
Privacy is typically guaranteed by adding noise either to the
sensitive data, or to the output of an algorithm that
processes the sensitive data.
For many machine learning tasks, this leads to a corresponding degradation in
accuracy or utility.
Thus a central challenge in differentially private learning is to
design algorithms with better tradeoffs between privacy and utility for a
wide variety of statistics and machine learning tasks. 

In this paper, we study the \emph{private maximization problem}, a
fundamental problem that arises while designing privacy-preserving
algorithms for a number of statistical and machine learning
applications.
We are given a sensitive dataset $D \subseteq \cX^n$ comprised of
records from $n$ individuals. We are also given a data-dependent objective
function $f : \cU \times \cX^n \to \bbR$, where $\cU$ is a universe of
$K$ items to choose from, and $f(i,\cdot)$ is 
$(1/n)$-Lipschitz for all $i \in \cU$. That is, $|f(i,D') - f(i,D'')| \leq
1/n$ for all $i$ and for any $D', D'' \in \cX^n$ differing in just one individual's
entry.
Always selecting an item that exactly maximizes $f(\cdot,D)$ is generally
non-private, so the goal is to select, in a differentially private
manner, an item $i \in \cU$ with as high an objective $f(i,D)$
as possible.
This is a very general algorithmic problem that arises in many
applications, include private PAC learning~\cite{KLNRS08}
(choosing the most accurate classifier),
private decision tree induction~\cite{FS10}
(choosing the most informative split),
private frequent itemset mining~\cite{BLST10}
(choosing the most frequent itemset),
private validation~\cite{CV13}
(choosing the best tuning parameter),
and private multiple hypothesis testing~\cite{Y13}
(choosing the most likely hypothesis).

The most common algorithms for this problem are the \emph{exponential
mechanism}~\cite{MT07}, and a computationally efficient alternative
from~\cite{BLST10}, which we call the
\emph{max-of-Laplaces mechanism}.
These algorithms are general---they do not require any additional
conditions on $f$ to succeed---and hence have been widely applied.
However, a major limitation of both algorithms is that their utility
suffers from an explicit \emph{range-dependence}: the utility
deteriorates with increasing universe size.
The range-dependence persists even when there is a single clear
maximizer of $f(\cdot,D)$, or a few near maximizers, and even when the
maximizer remains the same after changing the entries of a large
number of individuals in the data.
Getting around range-dependence has therefore been a goal
for designing algorithms for this problem.

This problem has also been addressed by recent algorithms of~\cite{ST13,BNS13},
who provide algorithms that 
are range-independent and satisfy approximate differential privacy, a
relaxed version of differential privacy.
However, none of these algorithms is general; they explicitly fail
unless additional special conditions on $f$ hold.
For example, the algorithm from~\cite{ST13} provides a
range-independent result only when there is a single clear maximizer
$i^*$ such that $f(i^*, D)$ is greater than the second highest value by
some margin; the algorithm from~\cite{BNS13} also has restrictive
conditions that limit its applicability (see
Section~\ref{sec:previous}).
Thus, a challenge is to develop a private
maximization algorithm that is both range-independent and free of
additional conditions; this is necessary to ensure that an algorithm is
widely applicable and provides good utility when the universe size is large.

In this work, we provide the first such general purpose range-independent
private maximization algorithm.
Our algorithm is based on two key insights.
The first is that private maximization is easier when there is a small
set of near maximizing items $j \in \cU$ for which $f(j, D)$ is close
to the maximum value $\max_{i \in \cU} f(i, D)$.
A plausible algorithm based on this insight is to first find a set of
near maximizers, and then run the exponential mechanism on this set.
However, finding this set directly in a differentially private manner
is very challenging.
Our second insight is that only the number $\ell$ of near maximizers needs to
be found in a differentially private manner -- a task that is considerably
easier. Provided there is a margin between
the maximum value and the $(\ell+1)$-th maximum value of $f(i, D)$,
running the exponential mechanism on the items with the top $\ell$ values of $f(i, D)$ 
results in approximate differential privacy as well as good utility.

Our algorithm, which we call the \emph{large margin mechanism}, automatically
exploits large margins when they exist to simultaneously (i) satisfy
approximate differential privacy (Theorem~\ref{thm:privacy}), as well as (ii)
provide a utility guarantee that depends (logarithmically) only on the number
of near maximizers, rather than the universe size (Theorem~\ref{thm:utility}).
We complement our algorithm with a lower bound, showing that the utility of any
approximate differentially private algorithm must deteriorate with the number
of near maximizers (Theorem~\ref{thm:lb2}).  A consequence of our lower bound
is that range-independence cannot be achieved with pure differential privacy
(Proposition~\ref{prop:lb1}), which justifies our relaxation to approximate
differential privacy.

Finally, we show the applicability of our algorithm to two problems from data
mining and machine learning: frequent itemset mining and private PAC learning.
For the first problem, an application of our method gives the first algorithm
for frequent itemset mining that simultaneously guarantees approximate
differential privacy and utility independent of the itemset universe
size.  For the second problem, our algorithm achieves tight sample complexity
bounds for private PAC learning analogous to the shell bounds of~\cite{shell}
for non-private learning.

\section{Background}
\label{sec:prelims}

This section reviews differential privacy and introduces the
private maximization problem.

\subsection{Definitions of Differential Privacy and Private
Maximization}

For the rest of the paper, we consider randomized algorithms $\cA : \cX^n \to \Delta(\cS)$
that take as input datasets $D \in \cX^n$ comprised of records from
$n$ individuals, and output values in a range $\cS$.
Two datasets $D, D' \in \cX^n$ are said to be \emph{neighbors} if they
differ in a single individual's entry.
A function $\phi : \cX^n \to \bbR$ is $L$-Lipschitz if $|\phi(D) -
\phi(D')| \leq L$ for all neighbors $D, D' \in \cX^n$.

The following definitions of (approximate) differential privacy are
from \cite{DMNS06} and \cite{Dwork06}.
\begin{definition}[Differential Privacy]
  A randomized algorithm $\cA:\cX^n\to\Delta(\cS)$ is said to be
  \emph{$(\alpha,\delta)$-approximate differentially private} if, for all neighbors $D, D'
  \in \cX^n$ and all $S \subseteq \cS$,
  \[ \Pr(\cA(D) \in S) \leq e^{\alpha} \Pr(\cA(D') \in S) + \delta . \]
  The algorithm $\cA$ is \emph{$\alpha$-differentially private} if it
  is $(\alpha,0)$-approximate differentially private.
\end{definition}
Smaller values of the privacy parameters $\alpha > 0$ and $\delta \in
[0,1]$ imply stronger guarantees of privacy.

\begin{definition}[Private Maximization]
  In the \emph{private maximization problem}, a sensitive dataset $D
  \subseteq \cX^n$ comprised of records from $n$ individuals is given
  as input; there is also a universe $\cU := \{ 1, \ldots, K \}$ of
  $K$ items, and a function $f : \cU \times \cX^n \to \bbR$ such that
  $f(i,\cdot)$ is $(1/n)$-Lipschitz for all $i \in \cU$.
  The goal is to return an item $i \in \cU$ such that $f(i,D)$ is as
  large as possible while satisfying (approximate) differential
  privacy.
\end{definition}

Always returning the exact maximizer of $f(\cdot,D)$ is non-private,
as changing a single individuals' private values can potentially change the
maximizer. Our goal is to design a randomized algorithm that outputs an
approximate maximizer with high probability.
(We loosely refer to the expected $f(\cdot,D)$ value of the chosen item
as the utility of the algorithm.)

Note that this problem is different from private release of the
\emph{maximum value} of $f(\cdot, D)$; a solution for the latter is
easily obtained by adding Laplace noise with standard deviation
$O(1/(\alpha n))$ to $\max_{i \in \cU} f(i, D)$~\cite{DMNS06}.
Privately returning a nearly maximizing item itself is much more
challenging. 

Private maximization is a core problem in the
design of differentially private algorithms, and arises in numerous
statistical and machine learning tasks.
The examples of frequent itemset mining and PAC learning are discussed
in Sections~\ref{sec:fim} and~\ref{sec:pac}.

\subsection{Previous Algorithms for Private Maximization}
\label{sec:previous}

The standard algorithm for private maximization is the \emph{exponential
mechanism}~\cite{MT07}.
Given a privacy parameter $\alpha>0$, the exponential mechanism
randomly draws an item $i \in U$ with probability $p_i \propto e^{n
\alpha f(i, D)/2}$; this guarantees $\alpha$-differential privacy. 
While the exponential mechanism is widely used because of its
generality, a major limitation is its \emph{range-dependence}---i.e.,
its utility diminishes with the universe size $K$.
To be more precise, consider the following example where $\cX := \cU = [K]$
and
\begin{equation}
  \label{eqn:ex1}
  f(i, D) := \frac1n \left|\braces{ j \in [n] : D_j \geq i }\right|
\end{equation}
(where $D_j$ is the $j$-th entry in the dataset $D$).
When $D = (1,1,\dotsc,1)$, there is a clear maximizer $i^* = 1$, which
only changes when the entries of at least $n/2$ individuals in $D$
change.
It stands to reason that any algorithm should report $i=1$ in this
case with high probability.
However, the exponential mechanism outputs $i=1$ only with probability
$e^{n\alpha/2}/(K - 1 + e^{n \alpha/2})$, which is small unless $n =
\Omega(\log(K)/\alpha)$.
This implies that the utility of the exponential mechanism
deteriorates with $K$.

Another general purpose algorithm is the \emph{max-of-Laplaces}
mechanism from~\cite{BLST10}.
Unfortunately, this algorithm is also range-dependent.
Indeed, our first observation is that all $\alpha$-differentially
private algorithms that succeed on a wide class of private
maximization problems share this same drawback.

\begin{proposition}[Lower bound for differential privacy]
  \label{prop:lb1}
  Let $\cA$ be any $\alpha$-differentially private algorithm for
  private maximization, $\alpha \in (0,1)$, and $n \geq 2$.
  There exists
  a domain $\cX$,
  a function $f : \cU \times \cX^n \to \bbR$ such that $f(i,\cdot)$ is
  $(1/n)$-Lipschitz for all $i \in \cU$,
  and a dataset $D \in \cX^n$ such that:
  \[
    \Pr\parens{ f(\cA(D), D) > \max_{i \in \cU} f(i, D) - \frac{\log
    \frac{K-1}{2}}{\alpha n} } < \frac12
    .
  \]
\end{proposition}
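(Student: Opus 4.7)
My plan is to prove this lower bound by a packing argument: construct a family of datasets with distinct unique maximizers and apply group privacy to rule out any $\alpha$-DP algorithm succeeding on all of them simultaneously.

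For the construction, take $\cX = \{0, 1, \ldots, K-1\}$ and define $f(i, D) = \frac{1}{n}|\{j : D_j = i\}|$ for $i \in \{1, \ldots, K-1\}$, with $f(K, D) \equiv 0$; each $f(i, \cdot)$ is manifestly $(1/n)$-Lipschitz. For $k = 1, \ldots, K-1$, let $D^{(k)} = (k, k, \ldots, k) \in \cX^n$ and take $D^{(0)} = (0, \ldots, 0)$ as a neutral reference, so that $d(D^{(0)}, D^{(k)}) = n$. On $D^{(k)}$, item $k$ is the unique maximizer with $f(k, D^{(k)}) = 1$ and $f(i, D^{(k)}) = 0$ for $i \neq k$, so the ``good set'' $G(D^{(k)}) := \{i : f(i, D^{(k)}) > 1 - \gamma\}$ reduces to the singleton $\{k\}$ whenever $\gamma < 1$ (the regime in which the proposition is non-trivial for this $f$).

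For the packing step, group privacy for $\alpha$-DP across neighbors at distance $n$ gives $\Pr(\cA(D^{(k)}) = k) \leq e^{\alpha n}\, \Pr(\cA(D^{(0)}) = k)$. Summing over $k = 1, \ldots, K-1$ and using pairwise disjointness of the events $\{\cA(D^{(0)}) = k\}$,
\[
    \sum_{k=1}^{K-1} \Pr(\cA(D^{(k)}) = k) \;\leq\; e^{\alpha n}.
\]
By the pigeonhole principle, some $k^{*} \in \{1, \ldots, K-1\}$ satisfies $\Pr(\cA(D^{(k^{*})}) = k^{*}) \leq e^{\alpha n}/(K-1)$, which equals $1/2$ precisely at $\alpha n = \log\frac{K-1}{2}$; setting $D = D^{(k^{*})}$, the proposition's target probability equals $\Pr(\cA(D) = k^{*})$ in this regime.

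To reach the \emph{strict} inequality $< 1/2$ at $\gamma = \log\frac{K-1}{2}/(\alpha n)$ rather than just $\leq 1/2$, I would sharpen the packing by replacing the neutral reference $D^{(0)}$ with one of the $D^{(k)}$ themselves (say $D^{(1)}$): the term $\Pr(\cA(D^{(1)}) = 1)$ then enters the disjoint-event sum $\sum_k \Pr(\cA(D^{(1)}) = k) \leq 1$ without $e^{\alpha n}$ amplification, and combining the hypothesized $\Pr(\cA(D^{(1)}) = 1) \geq 1/2$ with $\Pr(\cA(D^{(1)}) = k) \geq e^{-\alpha n}\, \Pr(\cA(D^{(k)}) = k)$ for $k \neq 1$ buys the extra slack. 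The main obstacle is calibrating the construction so the logarithmic constant matches exactly: the $(1/n)$-Lipschitz structure of $f$ forces the $D^{(k)}$ to be pairwise at Hamming distance at least $n\gamma$, while the group-privacy loss grows exponentially in that distance, and the proposition's constant $\log\frac{K-1}{2}$ is precisely what emerges when these two competing pressures are balanced.
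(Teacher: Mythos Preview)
Your high-level plan---a packing argument using a family of datasets with distinct maximizers and group privacy---is exactly the right idea, and it is what the paper does (the paper does not give a separate proof of this proposition, but its proof of Theorem~\ref{thm:lb2} specializes to it by taking $\ell = K$ and $\delta = 0$). However, your concrete construction is miscalibrated, and the argument as written does not establish the claim.

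The problem is the Hamming distance between your datasets. You take $D^{(k)} = (k,\dotsc,k)$, so any two of them differ in all $n$ coordinates, and the gap between the top and second value of $f(\cdot,D^{(k)})$ is $1$. Group privacy then incurs a factor $e^{\alpha n}$, and your pigeonhole bound $\Pr(\cA(D^{(k^*)})=k^*)\le e^{\alpha n}/(K-1)$ is below $1/2$ only when $e^{\alpha n}<(K-1)/2$, i.e.\ when $\gamma := \log\tfrac{K-1}{2}/(\alpha n)>1$. But in that regime the event $\{f(\cA(D^{(k)}),D^{(k)})>1-\gamma\}$ is the whole sample space (every item has value $\ge 0 > 1-\gamma$), so there is nothing to prove. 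Conversely, when $\gamma\le 1$ your bound is $\ge 1/2$ and gives no contradiction. Your sharpened version using $D^{(1)}$ as the hub has the same defect: it yields a contradiction only when $e^{\alpha n}<K-2$, which again forces $\gamma$ near $1$; for generic $\alpha,n$ (in particular large $\alpha n$) the argument is vacuous.

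You actually diagnose the fix yourself in your last paragraph: the datasets must sit at Hamming distance $m \approx n\gamma = \log\tfrac{K-1}{2}/\alpha$, not $n$, so that group privacy costs exactly $e^{\alpha m}=(K-1)/2$. The paper's construction does this: with $\cX=2^{\cU}$ and $f(i,D)=\tfrac1n\sum_s \ind{i\in D_s}$, it sets the first $n/2$ entries of every $D^i$ to $[K]$, the next $n/2-m$ to $\emptyset$, and the last $m$ to $\{i\}$. Then $f(i,D^i)=\tfrac12+\tfrac{m}{n}$ while $f(j,D^i)=\tfrac12$ for $j\ne i$, so the gap is exactly $m/n=\gamma$, and any two $D^i,D^j$ differ in only $m$ entries. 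Assuming $\Pr(\cA(D^i)=i)\ge\tfrac12$ for all $i$ now gives
\[
\tfrac12 \ \ge\ \Pr(\cA(D^i)\ne i) \ \ge\ \sum_{j\ne i} e^{-\alpha m}\Pr(\cA(D^j)=j) \ \ge\ (K-1)\cdot\frac{e^{-\alpha m}}{2} \ =\ 1,
\]
a contradiction. Your construction can be repaired the same way: put only $m$ copies of $k$ into $D^{(k)}$ (and pad with $0$'s), rather than $n$ copies.
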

We remark that results similar to Proposition~\ref{prop:lb1} have appeared
in~\cite{HT09,BKN10,CH11,CH12,BLR08}; we simply re-frame those results
here in the context of private maximization.

Proposition~\ref{prop:lb1} implies that in order to remove
range-dependence, we need to relax the privacy notion.
We consider a relaxation of the privacy constraint to $(\alpha,
\delta)$-approximate differential privacy with $\delta>0$.

The approximate differentially private algorithm from~\cite{ST13}
applies in the case where there is a single clear maximizer whose
value is much larger than that of the rest.
This algorithm adds Laplace noise with standard deviation $O(1/(\alpha
n))$ to the difference between the largest and the second-largest
values of $f(\cdot, D)$, and outputs the maximizer if this noisy
difference is larger than $O(\log (1/\delta)/(\alpha n))$; otherwise,
it outputs \texttt{Fail}.
Although this solution has high utility for the example
in~\eqref{eqn:ex1} with $D = (1,1,\dotsc,1)$, it fails even when there
is a single additional item $j \in \cU$ with $f(j, D)$ close to the
maximum value; for instance, $D = (2,2,\dotsc,2)$.

\cite{BNS13} provides an approximate differentially private algorithm that
applies when $f$ satisfies a condition called \emph{$\ell$-bounded growth}.
This condition entails the following: first, for any $i \in \cU$, adding a
single individual to any dataset $D$ can either keep $f(i, D)$ constant, or
increase it by $1/n$; and second, $f(i,D)$ can only increase in this case for
at most $\ell$ items $i \in \cU$.  The utility of this algorithm depends only
on $\log \ell$, rather than $\log K$.  In contrast, our algorithm does not
require the first condition.  Furthermore, to ensure that our algorithm only
depends on $\log \ell$, it suffices that there only be ${\leq}\ell$ near
maximizers, which is substantially less restrictive than the $\ell$-bounded
growth condition.

As mentioned earlier, we avoid range-dependence with an algorithm
that finds and optimizes over {\em{near maximizers}} of $f(\cdot,D)$.
We next specify what we mean by near maximizers using a notion of
\emph{margin}.

\section{The Large Margin Mechanism}

We now our new algorithm for private maximization,
called the \emph{large margin mechanism}, along with its privacy and
 utility guarantees.

\subsection{Margins}

We first introduce the notion of \emph{margin} on which our algorithm
is based.
Given an instance of the private maximization problem and a positive
integer $\ell \in \bbN$, let $\order f\ell(D)$ denote the $\ell$-th
highest value of $f(\cdot, D)$.
We adopt the convention that $\order{f}{K+1}(D) = -\infty$.

\begin{condition}[($\ell,\gamma$)-margin condition]
  \label{cond:margin}
  For any $\ell \in \bbN$ and $\gamma>0$, we say a dataset $D \in \cX^n$
  satisfies the \emph{$(\ell,\gamma)$-margin condition} if
  \[
    \order{f}{\ell+1}(D) < \order{f}{1}(D) - \gamma
  \]
  (\emph{i.e.}, there are at most $\ell$ items within $\gamma$ of the top
  item according to $f(\cdot,D)$).\footnote{%
    Our notion of margins here is different from the usual notion of
    margins from statistical learning that underlies linear prediction
    methods like support vector machines and boosting.
    In fact, our notion is more closely related to the shell
    decomposition bounds of~\cite{shell}, which we discuss in
    Section~\ref{sec:pac}.%
  }
\end{condition}

By convention, every dataset satisfies the $(K,\gamma)$-margin
condition.
Intuitively, a $(\ell,\gamma)$-margin condition with a relatively
large $\gamma$ implies that there are ${\leq}\ell$ near
maximizers, so the private maximization problem is easier when $D$
satisfies an $(\ell,\gamma)$-margin condition with small $\ell$.

How large should $\gamma$ be for a given $\ell$?
The following lower bound suggests that in order to have $n =
O(\log(\ell)/\alpha)$, we need $\gamma$ to be roughly
$\log(\ell)/(\alpha n)$.

\begin{theorem}[Lower bound for approximate differential privacy]
  \label{thm:lb2}
  Fix any $\alpha \in (0,1)$, $\ell > 1$, and $\delta \in
  [0,(1-\exp(-\alpha))/(2(\ell-1))]$; and assume $n \geq
  2$.
  Let $\cA$ be any $(\alpha,\delta)$-approximate differentially
  private algorithm, and $\gamma := \min\{1/2,\
  \log((\ell-1)/2)/(n\alpha)\}$.
  There exists
  a domain $\cX$,
  a function $f : \cU \times \cX^n \to \bbR$ such that $f(i,\cdot)$ is
  $(1/n)$-Lipschitz for all $i \in \cU$,
  and a dataset $D \in \cX^n$ such that:
  \begin{enumerate}
    \item $D$ satisfies the $\parens{\ell, \gamma}$-margin condition.

    \item $\displaystyle\Pr\parens{ f(\cA(D), D) > \order f1(D) -
      \gamma } < \frac{1}{2}$.

  \end{enumerate}
\end{theorem}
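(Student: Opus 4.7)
The plan is to use a standard \emph{packing-and-group-privacy} argument. I would construct $\ell$ datasets that are pairwise close in Hamming distance yet have disjoint ``correct answers'', and then use the group-privacy amplification for approximate differential privacy to show that no algorithm can answer correctly on all of them.

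\paragraph{Construction.} Take $\cX = [\ell]$ and $\cU = [K]$ for any $K \geq \ell$, and set $f(i,D) := (1/n)\,|\{j : D_j = i\}|$ for $i \in [\ell]$ (and $f(i,D) := 0$ for $i > \ell$). This $f$ is clearly $(1/n)$-Lipschitz. Let $k := \lceil\gamma n\rceil$ and $m := \lfloor (n-k)/\ell\rfloor$ (after fixing $\gamma$ as in the statement; rounding issues are negligible and absorbed into constants). For each $i \in [\ell]$, let $D_i$ be the dataset with $m+k$ copies of $i$ and $m$ copies of each other element of $[\ell]$, laid out in a common canonical order across all $D_i$'s. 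Then $\order{f}{1}(D_i) = (m+k)/n$, every other $j \in \cU$ has $f(j,D_i) \leq m/n = \order{f}{1}(D_i) - k/n \leq \order{f}{1}(D_i) - \gamma$, and $\order{f}{\ell+1}(D_i) = 0 < \order{f}{1}(D_i) - \gamma$ (using $\gamma \leq 1/2$ and a mild condition on $n$), so $D_i$ satisfies the $(\ell,\gamma)$-margin condition. The canonical layout forces $D_i$ and $D_j$ to differ in exactly $k$ entries. Crucially, the event $f(\cA(D_i),D_i) > \order{f}{1}(D_i) - \gamma$ is the event $\{\cA(D_i) = i\}$, since no other element of $\cU$ has a strictly larger $f$-value than $\order{f}{1}(D_i) - \gamma$.

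\paragraph{Applying group privacy.} Suppose for contradiction that the conclusion of the theorem fails, i.e., $\Pr[\cA(D_i) = i] \geq 1/2$ for every $i \in [\ell]$. Standard amplification gives that any $(\alpha,\delta)$-approximate DP mechanism is $(k\alpha,\ \delta(e^{k\alpha}-1)/(e^\alpha-1))$-DP between datasets at Hamming distance $k$. Applied from $D_i$ to $D_1$, this yields, for every $i \geq 2$,
\[
  \Pr[\cA(D_1) = i] \;\geq\; e^{-k\alpha}\,\Pr[\cA(D_i) = i] \;-\; \frac{\delta}{e^\alpha - 1}
  \;\geq\; \tfrac{1}{2}e^{-k\alpha} - \frac{\delta}{e^\alpha - 1}.
\]
Summing this over $i = 2,\dots,\ell$ and adding $\Pr[\cA(D_1) = 1] \geq 1/2$,
\[
  1 \;\geq\; \sum_{i=1}^{\ell} \Pr[\cA(D_1) = i]
  \;\geq\; \tfrac{1}{2} + (\ell - 1)\!\left(\tfrac{1}{2}e^{-k\alpha} - \tfrac{\delta}{e^\alpha - 1}\right).
\]

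\paragraph{Deriving the contradiction.} Plug in $k\alpha \leq \log((\ell-1)/2)$ (which holds with room, since either $\gamma n\alpha = \log((\ell-1)/2)$ or we are in the regime $\gamma = 1/2 < \log((\ell-1)/2)/(n\alpha)$, in which case $k\alpha$ is even smaller), so $(\ell-1)e^{-k\alpha}/2 \geq 1$. Using $(1 - e^{-\alpha}) = (e^\alpha - 1)e^{-\alpha}$ and the hypothesis $\delta \leq (1-e^{-\alpha})/(2(\ell-1))$ gives $(\ell-1)\delta/(e^\alpha-1) \leq 1/(2e^\alpha) < 1/2$. Consequently the right-hand side above is at least $1/2 + 1 - 1/(2e^\alpha) > 1$, contradicting the left-hand side. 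Hence some $D_i$ must violate the success probability bound, which is the dataset $D$ the theorem claims.

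\paragraph{Main obstacle.} The construction, the margin calculation, and the Hamming-distance bound are routine. The delicate step is the quantitative matching in the final display: the hypothesis $\delta \leq (1-e^{-\alpha})/(2(\ell-1))$ is calibrated precisely so that the additive $\delta$-slack from group privacy is smaller than the slack $1 - 1/(2e^\alpha) - 1/2$ produced by the multiplicative term. Getting the inequality to be strict at $\gamma = \log((\ell-1)/2)/(n\alpha)$ (rather than needing $\gamma$ slightly below threshold) requires using the sharp geometric-series form of the group-privacy additive term, i.e., $\delta(e^{k\alpha}-1)/(e^\alpha-1)$ rather than the looser $k e^{(k-1)\alpha}\delta$.
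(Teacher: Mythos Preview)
Your argument is the paper's own proof: pack $\ell$ datasets at pairwise Hamming distance $\gamma n$ with distinct unique maximizers, apply group privacy, and derive a counting contradiction. The paper takes $\cX=2^{\cU}$ with $f(i,D)=\tfrac1n|\{s:i\in D_s\}|$ (the $\ell$ datasets share their first $n-m$ entries and differ only in the last $m=\gamma n$), uses the group-privacy slack $\delta/(1-e^{-\alpha})$ in place of your tighter $\delta/(e^{\alpha}-1)$, and closes with
\[
\tfrac12>\Pr[\cA(D^i)\ne i]\ \ge\ (\ell-1)\Bigl(\tfrac12 e^{-\alpha m}-\tfrac{\delta}{1-e^{-\alpha}}\Bigr)\ \ge\ \tfrac12,
\]
which is your contradiction read from the complementary event.

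One slip worth fixing: with $k=\lceil\gamma n\rceil$ you have $k\alpha\ge\gamma n\alpha$, not $\le$, so the claim ``$k\alpha\le\log((\ell-1)/2)$ holds with room'' points the wrong way; the ceiling can overshoot by up to $\alpha$, and the resulting extra factor $e^{-\alpha}$ in $(\ell-1)e^{-k\alpha}/2$ is \emph{not} absorbed by your sharper additive term (you would need $e^{-\alpha}-\tfrac{1}{2e^{\alpha}}>\tfrac12$, which fails for every $\alpha>0$). The paper has the identical integrality issue---it silently treats $m=\log((\ell-1)/2)/\alpha$ as an integer entry count---so this is a shared bit of sloppiness rather than a gap in your plan; just follow the paper in treating the distance parameter as real-valued, or round so that the Hamming distance and the gap are decoupled.
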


A consequence of Theorem~\ref{thm:lb2} is that complete range-independence for
all $(1/n)$-Lipschitz functions $f$ is not possible, even with approximate
differential privacy.  For instance, if $D$ satisfies an
$(\ell,\Omega(\log(\ell)/(\alpha n)))$-margin condition only when $\ell =
\Omega(K)$, then $n$ must be $\Omega(\log(K)/\alpha)$ in order for an
approximate differentially private algorithm to be useful.

\subsection{Algorithm}

The lower bound in Theorem~\ref{thm:lb2} suggests the following
algorithm.
First, privately determine a pair $(\ell,\gamma)$, with $\ell$ is as
small as possible and $\gamma = \Omega(\log(\ell)/(\alpha n))$, such
that $D$ satisfies the $(\ell, \gamma)$-margin condition.
Then, run the exponential mechanism on the set $\cU_\ell \subseteq \cU$
of items with the $\ell$ highest $f(\cdot,D)$ values.
This sounds rather natural and simple, but a knee-jerk reaction to
this approach is that the set $\cU_\ell$ itself depends on the
sensitive dataset $D$, and it may have \emph{high sensitivity} in the
sense that membership of many items in $\cU_\ell$ can change when a
single individual's private value is changed.
Thus differentially private computation of $\cU_\ell$ appears
challenging.

It turns out we do not need to guarantee the privacy of the set $\cU_\ell$, but
rather just of a valid $(\ell,\gamma)$ pair.  This is essentially because when
$D$ satisfies the $(\ell, \gamma)$-margin condition, the probability that the
exponential mechanism picks an item $i$ that occurs in $\cU_{\ell}$ when the
sensitive dataset is $D$ but not in $\cU_{\ell}$ when the sensitive dataset is
its neighbor $D'$ is very small.

Moreover, we can find such a valid $(\ell,\gamma)$ pair using a differentially
private search procedure based on the \emph{sparse vector
technique}~\cite{SVT}.  Combining these ideas gives a general (and adaptive)
algorithm whose loss of utility due to privacy is only
$O(\log(\ell/\delta)/\alpha n)$ when the dataset satisfies a $(\ell,
O(\log(\ell/\delta)/(\alpha n))$-margin condition. We call this general
algorithm the \emph{large margin mechanism} (Algorithm~\ref{alg:aa}), or \algAA
for short.

\begin{algorithm}[t]
  \caption{The large margin mechanism $\algAA(
    \alpha,\delta,
  D)$}
  \label{alg:aa}
  \begin{algorithmic}[1]
    \renewcommand{\algorithmicrequire}{\textbf{input}}
    \renewcommand{\algorithmicensure}{\textbf{output}}

    \REQUIRE Privacy parameters
    $\alpha > 0$ and $\delta \in (0,1)$,
    database $D \in \cX^n$.

    \ENSURE Item $I \in \cU$.

    \STATE For each $r = 1,2,\dotsc,K$, let
%
      \begin{align*}
        \order tr
        & := \frac{6}{n}
        \parens{ 1 + \frac{\ln(3r/\delta)}{\alpha} }
        = O\parens{ \frac1n + \frac{1}{n\alpha} \log \frac{r}{\delta} }
        , \\
        \order Tr
        & := \frac{3}{n\alpha} \ln \frac{3}{2\delta}
        + \frac{6}{n\alpha} \ln \frac{3}{\delta}
        + \frac{12}{n\alpha} \ln \frac{3r(r+1)}{\delta}
        + \order tr
        = O\parens{ \frac1n + \frac{1}{n\alpha} \log \frac{r}{\delta} }
        .
      \end{align*}

    \STATE Draw $Z \sim \Lap(3/\alpha)$.

    \STATE Let $m := \order f1(D) + Z/n$.
    \label{step:m}
    \COMMENT{Estimate of $\max$ value.}

    \STATE Draw $G \sim \Lap(6/\alpha)$ and $Z_1, Z_2, \dotsc,
    Z_{K-1} \stackrel{\text{iid}}{\sim} \Lap(12/\alpha)$.

    \STATE Let $\ell := 1$.
    \COMMENT{Adaptively determine value $\ell$ such that $D$ satisfies
    $(\ell,\order t\ell)$-margin condition.}

    \WHILE{$\ell < K$}

      \IF{$m - \order f{\ell+1}(D) > (Z_\ell + G)/n + \order T\ell$}

        \STATE Break out of while-loop with current value of $\ell$.

      \ELSE

        \STATE Let $\ell := \ell + 1$.

      \ENDIF

    \ENDWHILE
    \label{step:ell}

    \STATE Let $\cU_\ell$ be the set of $\ell$ items in $\cU$ with
    highest $f(i,D)$ value (ties broken arbitrarily).

    \STATE Draw $I \sim \vp$ where $p_i \propto \ind{i \in \cU_\ell}
    \exp(n\alpha f(i,D)/6)$.
    \COMMENT{Exponential mechanism on top $\ell$ items.}
    \label{step:p}

%

    \RETURN $I$.
    \label{step:I}

  \end{algorithmic}
\end{algorithm}

\subsection{Privacy and Utility Guarantees}

We first show that \algAA satisfies approximate differential privacy.

\begin{theorem}[Privacy guarantee]
  \label{thm:privacy}
  $\algAA(\alpha,\delta,\cdot)$
  satisfies $(\alpha,\delta)$-approximate
  differential privacy.
\end{theorem}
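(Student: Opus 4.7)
The plan is to analyze the joint output $(m, \ell, I)$ as the adaptive composition of three mechanisms with privacy budgets $\alpha/3$ each, so the exponents sum to $\alpha$; all of the slack $\delta$ is spent in the third step, and privacy of $I$ alone then follows by post-processing. The first two components reduce to textbook mechanisms once one conditions on $m$: Step~\ref{step:m} is the Laplace mechanism applied to the $(1/n)$-sensitive $\order f1(\cdot)$ with noise scale $3/(\alpha n)$, giving $(\alpha/3)$-DP for $m$; conditioning on $m$, the while-loop is exactly the sparse-vector mechanism applied to the queries $q_\ell(D) := m - \order f{\ell+1}(D)$ against the (public, $\ell$-dependent) thresholds $\order T\ell$. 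Holding $m$ fixed, each $q_\ell$ is $(1/n)$-sensitive in $D$; the threshold noise $G/n \sim \Lap(6/(\alpha n))$ and per-query noise $Z_\ell/n \sim \Lap(12/(\alpha n))$ are precisely the scales $2\Delta/\epsilon$ and $4\Delta/\epsilon$ prescribed by the standard SVT analysis with $\Delta = 1/n$ and $\epsilon = \alpha/3$, so this step contributes $\alpha/3$ to the exponent. Sequential composition then yields $(2\alpha/3, 0)$-DP for $(m, \ell)$.

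The third component, the exponential mechanism on the data-dependent set $\cU_\ell(D)$, is the main obstacle, and my plan is to show $(\alpha/3, \delta)$-DP for the conditional distribution of $I$ given $(m, \ell)$. The key structural lemma is that any $i \in \cU_\ell(D) \setminus \cU_\ell(D')$ satisfies $f(i, D) \le \order f{\ell+1}(D) + 2/n$. This comes from a swap argument: the item $j$ that replaces $i$ in $\cU_\ell(D')$ must lie outside $\cU_\ell(D)$, so $f(j, D) \le \order f{\ell+1}(D)$; chaining $|f(i,D) - f(i,D')| \le 1/n$, $f(i, D') \le f(j, D')$, and $|f(j, D) - f(j, D')| \le 1/n$ gives the claim. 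Under the margin condition $\order f1(D) - \order f{\ell+1}(D) > \order t\ell$, each such $i$ has exponential weight at most $e^{-n\alpha(\order t\ell - 2/n)/6}$ times that of the top item, so summing over the at most $\ell$ offending items, the mass the mechanism places on $\cU_\ell(D) \setminus \cU_\ell(D')$ is at most $\delta/3$; this is precisely how $\order t\ell$ is calibrated. The same swap lemma, applied symmetrically, bounds the ratio of normalizing constants between $D$ and $D'$ by $e^{\alpha/6}$ up to additive slack $\delta/3$, so the per-item ratio on $\cU_\ell(D) \cap \cU_\ell(D')$ is at most $e^{\alpha/3}$ up to the same slack; standard bookkeeping then yields the claimed $(\alpha/3, \delta)$-DP.

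The last remaining piece is to justify the margin hypothesis used above on all but a $\delta$-probability event over $(Z, G, Z_\ell)$. This is what the slack $\order T\ell - \order t\ell$ buys: it is the sum of three terms sized to dominate the Laplace tails of $Z$, $G$, and the $Z_\ell$, respectively, with failure probabilities that union-bound --- using the standard $1/(\ell(\ell+1))$ telescoping trick for the last term --- to at most $\delta/3$ overall. On the complementary event, the triggering inequality $m - \order f{\ell+1}(D) > (Z_\ell + G)/n + \order T\ell$ implies the margin condition $\order f1(D) - \order f{\ell+1}(D) > \order t\ell$ required above; the failure event is charged directly to $\delta$. Adaptive composition with the $(2\alpha/3, 0)$-DP bound for $(m, \ell)$ then yields $(\alpha, \delta)$-DP for the joint output, and post-processing gives the same bound for $I$ alone. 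I expect the main source of technical difficulty to be in the exponential-mechanism analysis, where one must simultaneously control the partition-function ratio and the support-leakage mass while keeping the total slack within $\delta$.
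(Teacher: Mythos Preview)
Your proposal is correct and follows the same approach as the paper: the identical three-way decomposition (Laplace mechanism for $m$, sparse-vector for $\ell$, restricted exponential mechanism for $I$, each with budget $\alpha/3$), the same use of the gap $\order T\ell-\order t\ell$ to certify the $(\ell,\order t\ell)$-margin condition with high probability over the Laplace noise, and the same accounting that charges both the certification-failure event and the support leakage of the restricted exponential mechanism to $\delta$. One minor simplification: the paper bounds the partition-function ratio cleanly by $e^{\alpha/6}$ using the $(1/n)$-Lipschitz property of each order statistic $\order fr(\cdot)$ (no additive slack is needed there), and your swap bound $f(i,D)\le \order f{\ell+1}(D)+2/n$ is only required for items in $\cU_\ell(D)\setminus\cU_\ell(D')$---the calibration of $\order t\ell$ already absorbs the extra $2/n$.
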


The proof of Theorem~\ref{thm:privacy} is in Appendix~\ref{app:privacy}.  The
following theorem, proved in Appendix~\ref{app:utility}, provides a guarantee
on the utility of \algAA.

\begin{theorem}[Utility guarantee]
  \label{thm:utility}
  Pick any $\eta \in (0,1)$.
  Suppose $D \in \cX^n$ satisfies the $(\ell^*,\gamma^*)$-margin condition
  with
  \[
    \gamma^* = \frac{21}{n\alpha}
    \ln \frac{3}{\eta} + \order T{\ell^*}
    .
  \]
  Then with probability at least $1-\eta$, $I :=
  \algAA(\alpha,\delta,D)$
  satisfies
  \[
    f(I,D) \geq \order f1(D) - \frac{6\ln(2\ell^*/\eta)}{n\alpha} .
  \]
  (Above, $\order T{\ell^*}$ is as defined in
  Algorithm~\ref{alg:aa}.)
\end{theorem}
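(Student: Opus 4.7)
The plan is to combine three ingredients on a single good event $E$ of probability at least $1-\eta$: (i) Laplace tail bounds on $Z$, $G$, and $Z_{\ell^*}$; (ii) a deterministic argument, on $E$, that the while-loop exits with $\ell \leq \ell^*$; and (iii) the standard utility guarantee for the exponential mechanism applied to $\cU_\ell$.

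I would define $E$ to require simultaneously that $|Z|,|G|,|Z_{\ell^*}|$ are bounded by their respective Laplace tail bounds, and that the sampling at step~\ref{step:p} satisfies $f(I,D) \geq \max_{j \in \cU_\ell} f(j,D) - \frac{6}{n\alpha}\ln(2\ell/\eta')$ for whatever $\ell$ the loop produces. The first three bounds use $\Pr(|\Lap(b)| > b\ln(1/p)) = p$, while the fourth is the textbook exponential-mechanism bound: step~\ref{step:p} is the exponential mechanism with sensitivity $1/n$ run at privacy level $\alpha/3$, which is exactly what yields the $6/(n\alpha)$ factor. Allocating the failure budget across these four events and union-bounding gives $\Pr(E) \geq 1 - \eta$. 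The key observation here is that I need a tail bound only on the single variable $Z_{\ell^*}$, rather than on all of $Z_1,\dots,Z_{K-1}$, because $\ell^*$ is a deterministic function of $D$; this is precisely what keeps the final utility bound independent of $K$.

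The heart of the argument is showing that on $E$ the loop exits with $\ell \leq \ell^*$. At iteration $\ell = \ell^*$ the test reads $m - \order f{\ell^*+1}(D) > (Z_{\ell^*}+G)/n + \order T{\ell^*}$. Substituting $m = \order f1(D) + Z/n$ and applying the $(\ell^*,\gamma^*)$-margin condition gives
\[
  m - \order f{\ell^*+1}(D) \;>\; \gamma^* + \frac{Z}{n} \;=\; \order T{\ell^*} + \frac{21\ln(3/\eta)}{n\alpha} + \frac{Z}{n};
\]
on $E$ the slack $\frac{21}{n\alpha}\ln(3/\eta)$ strictly dominates $(Z_{\ell^*}+G-Z)/n$, so the test fires (unless some earlier iteration has already broken the loop) and the final $\ell$ satisfies $\ell \leq \ell^*$. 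Finally, since $\ell \geq 1$ the set $\cU_\ell$ always contains the top item, so $\max_{j\in\cU_\ell} f(j,D) = \order f1(D)$; combining this with the exponential-mechanism bound built into $E$ and using $\ell \leq \ell^*$ yields $f(I,D) \geq \order f1(D) - \frac{6}{n\alpha}\ln(2\ell^*/\eta)$.

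The main source of delicacy is bookkeeping: the scales $3,6,12$ of the three Laplace variables sum to the ``$21$'' appearing in the hypothesis on $\gamma^*$, and one must allocate failure probabilities so that (a) the three Laplace deviations together are dominated by the $\frac{21}{n\alpha}\ln(3/\eta)$ slack, and (b) enough budget remains for the exponential mechanism to deliver the $\ln(2\ell^*/\eta)$ term. Once the good event $E$ is set up correctly, no individual step is technically hard; the real challenge is choosing the right decomposition.
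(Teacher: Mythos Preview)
Your proposal is correct and matches the paper's proof essentially line for line: Laplace tail bounds on $Z$, $G$, and the single variable $Z_{\ell^*}$ (budget $\eta/2$ split three ways, giving the $\ln(3/\eta)$ terms that sum to the ``$21$''), the deterministic check that the while-loop halts by iteration $\ell^*$, and the exponential-mechanism utility bound on $\cU_\ell$ (budget $\eta/2$, yielding the $\ln(2\ell^*/\eta)$ term). The only cosmetic difference is that the paper uses one-sided Laplace tails ($Z\geq\cdot$, $G\leq\cdot$, $Z_{\ell^*}\leq\cdot$) rather than two-sided, which is what makes the $\eta/6+\eta/6+\eta/6+\eta/2=\eta$ bookkeeping close exactly.
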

\begin{remark}
  \label{remark:utility}
  Fix some $\alpha,\delta \in (0,1)$.
  Theorem~\ref{thm:utility} states that if the dataset $D$ satisfies
  the $(\ell^*,\gamma^*)$-margin condition, for some positive integer
  $\ell^*$ and $\gamma^* = C \log(\ell^*/\delta) / (n\alpha) $ for
  some universal constant $C>0$, then the value $f(I,D)$ of the item
  $I$ returned by $\algAA$ is within $O( \log(\ell^*) / (n\alpha))$ of
  the maximum, with high probability.
  There is no explicit dependence on the cardinality $K$ of the universe
  $\cU$.
\end{remark}

\section{Illustrative Applications}
\label{sec:applications}

We now describe applications of \algAA to problems from data mining
and machine learning.

\subsection{Private Frequent Itemset Mining}
\label{sec:fim}

Frequent Itemset Mining (FIM) is the following popular data mining problem:
given the purchase lists of users (say, for an online grocery store), the
goal is to find the sets of items that are purchased together most often.
The work of~\cite{BLST10} provides the first differentially private algorithms
for FIM.  However, as these algorithms rely on the exponential mechanism and
the max-of-Laplaces mechanism, their utilities degrade with the total number of
possible itemsets.  Subsequent algorithms exploit other properties of itemsets
or avoid directly finding the most frequent
itemset~\cite{ondpfim12,Privbasis12,PubSet11,FreqPattern13}. 

Let $\cI$ be the set of items that can be purchased, and let $B$ be
the maximum length of an user's purchase list.
Let $\cU \subseteq 2^\cI$ be the family of itemsets of interest.
For simplicity, we let $\cU := \binom{\cI}{r}$---i.e., all itemsets of
size $r$---and consider the problem of picking the itemset with
the (approximately) highest frequency.
This is a private maximization problem where $D$ is the users' lists
of purchased items, and $f(i,D)$ is the fraction of users who purchase
an itemset $i \in \cU$.
Let $f_{\max}$ be the highest frequency of an itemset in $D$.
Let $L$ be the total number of itemsets with non-zero frequency, so $L
\leq n \binom{B}{r}$, which is $\ll |\cI|^r$ whenever $B \ll |\cI|$.
Applying \algAA gives the following guarantee.

\begin{corollary}
  \label{cor:fim}
  Suppose we use $\algAA(\alpha,\delta,\cdot)$ on the FIM problem above.
  Then there exists a constant $C>0$ such that the following holds.
  If $f_{\max} \geq C \cdot \log(L/\delta)/(n \alpha)$, then with
  probability $\geq 1 - \delta$, the frequency of the itemset
  $I_{\algAA}$
  output by \algAA is
  \[
    f(I_{\algAA}, D) \geq f_{\max} - O\parens{ \frac{\log(L/\delta)}{n
    \alpha} }
    .
  \]
\end{corollary}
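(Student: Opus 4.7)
The plan is to derive this corollary as a direct application of the utility guarantee (Theorem~\ref{thm:utility}), where the work reduces to identifying the right margin parameters. The critical structural observation is that in the FIM setting, an itemset $i$ has $f(i,D) = 0$ unless it is a subset of some user's purchase list, so at most $L$ itemsets of $\cU = \binom{\cI}{r}$ have strictly positive frequency. Consequently $\order{f}{L+1}(D) = 0$, and the $(L+1)$-th through $K$-th order statistics provide no obstruction to a margin condition.

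First, I would set $\ell^* := L$ and $\eta := \delta$ and compute the margin threshold required by Theorem~\ref{thm:utility}, namely
\[
  \gamma^* \;=\; \frac{21}{n\alpha}\ln\frac{3}{\delta} + \order{T}{L}.
\]
Using the explicit form of $\order{T}{L}$ from Algorithm~\ref{alg:aa} and absorbing the $O(1/n)$ term into $O(1/(n\alpha))$ (valid since $\alpha \in (0,1)$), this simplifies to $\gamma^* = O(\log(L/\delta)/(n\alpha))$. Then, by choosing the universal constant $C$ in the corollary's hypothesis $f_{\max} \geq C\log(L/\delta)/(n\alpha)$ at least as large as the hidden constant here, I get $f_{\max} > \gamma^*$. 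Combined with $\order{f}{L+1}(D) = 0 = f_{\max} - f_{\max}$, this verifies that $D$ satisfies the $(L,\gamma^*)$-margin condition.

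Second, I would invoke Theorem~\ref{thm:utility} with these parameters to conclude that, except with probability $\delta$, the item $I_{\algAA}$ returned by \algAA satisfies
\[
  f(I_{\algAA},D) \;\geq\; \order{f}{1}(D) - \frac{6\ln(2L/\delta)}{n\alpha}
  \;=\; f_{\max} - O\!\parens{\frac{\log(L/\delta)}{n\alpha}},
\]
which is precisely the claim.

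There is no real obstacle. The substantive idea is simply that although the universe $\cU = \binom{\cI}{r}$ may be exponentially large in $r$, the number $L$ of itemsets with positive support in $D$ governs the effective size, and this lets us set $\ell^* = L$ rather than $\ell^* = K$, replacing the inapplicable $\log K$ dependence by $\log L \leq \log(n\binom{B}{r})$. The only item to be careful about is matching the universal constant in the hypothesis to the hidden constants inside $\order{T}{L}$ and the $21\ln(3/\delta)/(n\alpha)$ term, which is routine arithmetic.
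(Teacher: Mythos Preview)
Your proposal is correct and matches the paper's intended argument: the paper states the corollary immediately after observing that at most $L$ itemsets have nonzero frequency and simply says ``Applying \algAA gives the following guarantee,'' leaving the verification of the $(L,\gamma^*)$-margin condition and the invocation of Theorem~\ref{thm:utility} to the reader, exactly as you carry out. The only point worth noting is that if $L = K$ the convention $\order{f}{K+1}(D) = -\infty$ handles the margin condition trivially, so your case $\order{f}{L+1}(D) = 0$ implicitly assumes $L < K$, which is harmless.
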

In contrast, the itemset $I_{\EM}$ returned by the exponential
mechanism is only guaranteed to satisfy
\[
  f(I_{\EM}, D) \geq f_{\max} - O\parens{ \frac{r \log ( |\cI|
  /\delta)}{n \alpha}  }
  ,
\]
which is significantly worse than Corollary~\ref{cor:fim} whenever $L
\ll |\cI|^r$ (as is typically the case).
Second, to ensure differential privacy by running the exponential
mechanism, one needs \emph{a priori} knowledge of the set $\cU$ (and
thus the universe of items $\cI$) independently of the observed
data; otherwise the process will not be end-to-end differentially
private.
In contrast, our algorithm does not need to know $\cI$ in order to
provide end-to-end differential privacy.
Finally, unlike~\cite{ST13}, our algorithm does not require a gap
between the top two itemset frequencies.

\subsection{Private PAC Learning}
\label{sec:pac}

We now consider private PAC learning with a finite hypothesis class $\cH$ with
bounded VC dimension $d$~\cite{KLNRS08}.  Here, the dataset $D$ consists of $n$
labeled training examples drawn iid from a fixed distribution.  The error
$\err(h)$ of a hypothesis $h \in \cH$ is the probability that it misclassifies
a random example drawn from the same distribution.  The goal is to return a
hypothesis $h \in \cH$ with error as low as possible.  A standard procedure
that has been well-studied in the literature simply returns the minimizer
$\hat{h} \in \cH$ of the empirical error $\haterr(h,D)$ computed on the
training data $D$, but this does not guarantee (approximate) differential
privacy.  The work of~\cite{KLNRS08} instead uses the exponential mechanism to
select a hypothesis $h_{\EM} \in \cH$.
With probability $\geq 1 - \delta_0$,
\begin{equation}
  \label{eq:pac-em}
  \err(h_{\EM}) \leq \min_{h \in \cH} \err(h)
  + O\parens{
    \sqrt{\frac{d \log (n/\delta_0)}{n}}
    + \frac{\log |\cH| + \log(1/\delta_0)}{\alpha n}
  }
  .
\end{equation}
The dependence on $\log|\cH|$ is improved to $d\log|\Sigma|$
by~\cite{BLR08} when the data entries come from a finite set $\Sigma$.
The subsequent work of~\cite{ITCS12} introduces the notion of
representation dimension, and shows how it relates to differentially
private learning in the discrete and finite case, and \cite{BNS13}
provides improved convergence bounds with approximate differential
privacy that exploit the structure of some specific hypothesis
classes.
For the case of infinite hypothesis classes and continuous data
distributions,~\cite{CH11} shows that distribution-free
private PAC learning is not generally possible, but
distribution-dependent learning can be achieved under certain
conditions.

We provide a sample complexity bound of a rather different character
compared to previous work.
Our bound only relies on uniform convergence properties of $\cH$, and
can be significantly tighter than the bounds from~\cite{KLNRS08} when
the number of hypotheses with error close to $\min_{h \in \cH}
\err(h)$ is small.
Indeed, the bounds are a private analogue of the \emph{shell bounds}
of~\cite{shell}, which characterize the structure of the hypothesis
class as a function of the properties of a decomposition based on
hypotheses' error rates.
In many situation, these bounds are significantly tighter than those
that do not involve the error distributions.

Following~\cite{shell}, we divide the hypothesis class $\cH$ into $R =
O(\sqrt{n/(d \log n)})$ shells; the $t$-th shell $\cH(t)$ is defined by
\[
  \cH(t)
  := \braces{ h \in \cH : \err(h) \leq \min_{h' \in \cH} \err(h')
  + C_0 t \sqrt{\frac{d \log (n/\delta_0)}{n}} }
  .
\]
Above, $C_0>0$ is the constant from uniform convergence bounds---i.e.,
$C_0$ is the smallest $c>0$ such that for all $h \in \cH$, with
probability $\geq 1 - \delta_0$, we have $|\haterr(h,D) - \err(h)|
\leq c\sqrt{d \log(n/\delta_0)/n}$.
Observe that $\cH(t+1) \subseteq \cH(t)$; and moreover, with
probability $\geq 1 - \delta_0$, all $h \in \cH(t)$ have $\haterr(h,D)
\leq \min_{h' \in \cH} \err(h') + C_0 \cdot (t+1) \sqrt{d
\log(n/\delta_0)/n}$. 

Let $t^*(n)$ as the smallest integer $t \in \bbN$ such that
\[
  \frac{\log(|\cH(t+1)|) + \log(1/\delta)}{t} \leq \frac{C_0 \alpha
  \sqrt{d n \log n}}{C}
\]
where $C>0$ is the constant from Remark~\ref{remark:utility}.
Then, with probability $\geq 1 - \delta_0$, the dataset $D$ with $f =
1 - \haterr$ satisfies the $(\ell,\gamma)$-margin condition, with
$\ell = |\cH(t^*(n)+1)|$ and $\gamma = C \log(|\cH(t^*(n)+1)|/\delta)
/ (\alpha n)$.
Therefore, we have the following guarantee for applying \algAA to this
problem.

\begin{corollary}
  Suppose we use $\algAA(\alpha,\delta,\cdot)$ on the
  learning problem above (with $\cU = \cH$ and $f = 1-\haterr$).
  Then, with probability $\geq 1 - \delta_0 - \delta$, the hypothesis
  $h_{\algAA}$ returned by \algAA satisfies
  \[
    \err(h_{\algAA}) \leq \min_{h \in \cH} \err(h) + O\parens{
      \sqrt{\frac{d \log (n/\delta_0)}{n}}
      + \frac{\log (|\cH(t^*(n) + 1)|/\delta)}{\alpha n}
    }
    .
  \]
\end{corollary}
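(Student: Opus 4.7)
The plan is to combine three ingredients in sequence: (i) standard uniform convergence for the VC class $\cH$, (ii) a shell-based verification of the margin condition asserted in the paragraph preceding the corollary, and (iii) the utility guarantee of Theorem~\ref{thm:utility} applied with $f = 1 - \haterr$ and $\cU = \cH$. Throughout, maximizing $f$ is the same as minimizing $\haterr$, so any bound of the form $f(h,D) \geq \order{f}{1}(D) - \Delta$ is simply $\haterr(h,D) \leq \min_{h' \in \cH} \haterr(h',D) + \Delta$.

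First I would condition on the uniform convergence event, which holds with probability $\geq 1 - \delta_0$: for every $h \in \cH$, $|\haterr(h,D) - \err(h)| \leq C_0\sqrt{d\log(n/\delta_0)/n}$. On this event, letting $h^* := \argmin_{h \in \cH} \err(h)$, one has $\min_{h} \haterr(h,D) \leq \err(h^*) + C_0\sqrt{d\log(n/\delta_0)/n}$, and by the same two-sided inequality $\err(h_{\algAA}) \leq \haterr(h_{\algAA}, D) + C_0\sqrt{d\log(n/\delta_0)/n}$. Thus once we bound $\haterr(h_{\algAA},D) - \min_{h}\haterr(h,D)$, the corollary follows immediately.

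Second I would verify the $(\ell,\gamma)$-margin condition with $\ell := |\cH(t^*(n)+1)|$ and $\gamma := C \log(\ell/\delta)/(\alpha n)$. The key lemma is: on the uniform convergence event, every $h$ with $\haterr(h,D) \leq \min_{h'} \haterr(h',D) + \gamma$ lies in $\cH(t^*(n)+1)$, so there can be at most $\ell$ such hypotheses. Indeed, any such $h$ satisfies $\err(h) \leq \min_{h'} \err(h') + 2C_0\sqrt{d\log(n/\delta_0)/n} + \gamma$; and the definition of $t^*(n)$ is arranged precisely so that $\gamma \leq C_0(t^*(n)-1)\sqrt{d\log(n/\delta_0)/n}$, which places $h$ inside the shell $\cH(t^*(n)+1)$. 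This constant-chasing step is the one I expect to demand the most care, since the definition of $t^*(n)$, the margin $\gamma^*$ required by Theorem~\ref{thm:utility} (via Remark~\ref{remark:utility}), and the shell radius must all be reconciled by choosing the universal constant $C$ appropriately.

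Finally I would apply Theorem~\ref{thm:utility} with $\eta := \delta$. The margin condition established in step two certifies its hypothesis, so with probability $\geq 1 - \delta$ the output $h_{\algAA}$ satisfies $\haterr(h_{\algAA}, D) \leq \min_{h} \haterr(h,D) + 6\log(2\ell/\delta)/(n\alpha)$. A union bound with the uniform convergence event yields a total failure probability of at most $\delta_0 + \delta$, and chaining $\err(h_{\algAA}) \leq \haterr(h_{\algAA},D) + O(\sqrt{d\log(n/\delta_0)/n})$, the utility bound, and $\min_{h} \haterr(h,D) \leq \err(h^*) + O(\sqrt{d\log(n/\delta_0)/n})$ gives exactly the claimed estimate with $\ell = |\cH(t^*(n)+1)|$.
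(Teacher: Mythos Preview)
Your proposal is correct and follows essentially the same route as the paper: the paper does not give a separate proof of this corollary but establishes it implicitly in the paragraph preceding the statement, by (i) invoking uniform convergence with failure probability $\delta_0$, (ii) using the definition of $t^*(n)$ to certify that $D$ satisfies the $(\ell,\gamma)$-margin condition with $\ell = |\cH(t^*(n)+1)|$ and $\gamma = C\log(\ell/\delta)/(\alpha n)$, and then (iii) appealing to Theorem~\ref{thm:utility} (via Remark~\ref{remark:utility}) and a union bound. Your identification of the constant-reconciliation in step~(ii) as the place requiring care is apt; the paper leaves this at the level of the $O(\cdot)$ notation as well.
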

The dependence on $\log|\cH|$ from~\eqref{eq:pac-em} is replaced here
by $\log(|\cH(t^*(n)+1)|/\delta)$, which can be vastly smaller, as
discussed in~\cite{shell}.

\section{Additional Related Work}
\label{sec:relwork}

There has been a large amount of work on differential privacy for a wide
range of statistical and machine learning tasks over the last
decade~\cite{BDMN05,S11,CMS11,FS10,WZ10,JKT12,BST14}; for overviews, see~\cite{DS08} and~\cite{SC13}.
In particular, algorithms for the private maximization problem (and
variants) have been used as subroutines in many applications; examples
include PAC learning~\cite{KLNRS08}, principle component
analysis~\cite{CSS12}, performance validation~\cite{CV13}, and
multiple hypothesis testing~\cite{Y13}.


A separation between pure and approximate differential privacy has
been shown in several previous works~\cite{DL09, ST13, BNS13}.
The first approximate differentially private algorithm that achieves
a separation is the Propose-Test-Release (PTR)
framework~\cite{DL09}.
Given a function, PTR determines an upper bound on its local
sensitivity at the input dataset through a search procedure; noise
proportional to this upper bound is then added to the actual function
value.
We note that the PTR framework does not directly apply to our setting
as the sensitivity is not generally defined for a discrete universe.

In the context of private PAC learning, the work of \cite{BNS13} gives
the first separation between pure and approximate differential
privacy.
In addition to using the algorithm from \cite{ST13}, they devise two
additional algorithmic techniques: a concave maximization procedure
for learning intervals, and an algorithm for the private maximization
problem under the $\ell$-bounded growth condition discussed in
Section~\ref{sec:previous}.
The first algorithm is specific to their problem and does not appear
to apply to general private maximization problems.
The second algorithm has a sample complexity bound of $n =
O(\log(\ell)/\alpha)$ when the function $f$ satisfies the
$\ell$-bounded growth condition.

Lower bounds for approximate differential privacy have been shown
by~\cite{BLR08,De11,CH12,V14}, and the proof of our
Theorem~\ref{thm:lb2} borrows some techniques from~\cite{CH12}.

\section{Conclusion and Future Work}

In this paper, we have presented the first general and
range-independent algorithm for approximate differentially private
maximization.
The algorithm automatically adapts to the available large margin
properties of the sensitive dataset, and reverts to worst-case
guarantees when such properties are lacking.
We have illustrated the applicability of the algorithm in two
fundamental problems from data mining and machine learning; in future
work, we plan to study other applications where range-independence is
a substantial boon.

\paragraph{Acknowledgments.}
We thank an anonymous reviewer for suggesting the simpler variant of
$\algAA$ based on the exponential mechanism.
(The original version of $\algAA$ used a \emph{max of truncated exponentials} mechanism, which gives the same guarantees up
to constant factors.)
This work was supported in part by the NIH under U54 HL108460 and the
NSF under IIS 1253942.

\bibliographystyle{plainnat}
\bibliography{ref}

\pagebreak

\appendix

\newcommand{\ha}{\hat{a}}
\newcommand{\alr}{\hat{a}^{\leq r}}
\newcommand{\alN}{\hat{a}^{\leq N}}
\newcommand{\mylog}[1]{\log \left(#1 \right)}
\newcommand{\calA}[2]{\mathcal{A}_{#1}(#2)}
\newcommand{\seqZ}{\boldsymbol Z_{1:N-1}}

\section{Privacy Analysis}
\label{app:privacy}

In this section, we present the proof of Theorem~\ref{thm:privacy}.
We rely on composition results for approximate differential privacy to
analyze the three parts of Algorithm~\ref{alg:aa}:
\begin{itemize}
  \item Differential privacy of releasing $m$ after Step~\ref{step:m}.

  \item Differential privacy of releasing $\ell$ after Step~\ref{step:ell}.

  \item Approximate differential privacy of releasing $I$ after
    Step~\ref{step:I}.

\end{itemize}
We make this explicit by encapsulating these parts in
Algorithm~\ref{alg:max} ($\algM$), Algorithm~\ref{alg:svt} ($\algS$), and
Algorithm~\ref{alg:argmax} ($\algA$), so we can write
Algorithm~\ref{alg:aa} as follows (after the definitions of $\order Tr$ and
$\order tr$):
\begin{enumerate}
  \item $m := \algM(\alpha/3,D)$.

  \item $\ell := \algS(\alpha/3,m,\order T1,\order T2,\dotsc,\order
    T{K-1},D)$.

  \item $I := \algA(\alpha/3,\ell,D)$.

\end{enumerate}

\subsection{$\max$ Estimation}

The first part of Algorithm~\ref{alg:aa} is a standard application of the
Laplace mechanism; it is detailed in Algorithm~\ref{alg:max}.

\begin{algorithm}[t]
  \caption{$\algM(\alpha,D)$}
  \label{alg:max}
  \begin{algorithmic}[1]
    \renewcommand{\algorithmicrequire}{\textbf{input}}
    \renewcommand{\algorithmicensure}{\textbf{output}}

    \REQUIRE Privacy parameter $\alpha > 0$, database $D \in \cX^n$.

    \ENSURE Max estimate $m \in \bbR$.

    \STATE Draw $Z \sim \Lap(1/\alpha)$.

    \RETURN $\order f1(D) + Z/n$.

  \end{algorithmic}
\end{algorithm}

\begin{lemma}[\cite{DMNS06}]
  \label{lem:max-privacy}
  $\algM(\alpha,\cdot)$ is $\alpha$-differentially private.
\end{lemma}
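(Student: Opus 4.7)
The plan is to reduce the claim to the standard Laplace-mechanism privacy bound, which requires only verifying that the quantity $\phi(D) := \order f1(D) = \max_{i \in \cU} f(i,D)$ is $(1/n)$-Lipschitz in $D$.

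First I would establish the Lipschitz property of $\phi$. Let $D, D' \in \cX^n$ be neighbors. For every $i \in \cU$ we have $f(i,D) \leq f(i,D') + 1/n$ by hypothesis, hence $f(i,D) \leq \phi(D') + 1/n$. Taking the maximum over $i$ gives $\phi(D) \leq \phi(D') + 1/n$, and the symmetric argument yields $|\phi(D) - \phi(D')| \leq 1/n$.

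Next I would invoke the Laplace mechanism. The output of $\algM(\alpha, D)$ can be written as $\phi(D) + W$ where $W \sim \Lap(1/(n\alpha))$ (since $Z/n$ with $Z \sim \Lap(1/\alpha)$ has the same distribution as $\Lap(1/(n\alpha))$). For any measurable $S \subseteq \bbR$, the ratio of output densities between neighbors $D, D'$ is bounded pointwise by $\exp\!\parens{ n\alpha \cdot |\phi(D) - \phi(D')| } \leq \exp(\alpha)$, which by integration gives $\Pr(\algM(\alpha,D) \in S) \leq e^\alpha \Pr(\algM(\alpha,D') \in S)$.

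There is no substantive obstacle here: the only content is the Lipschitz property of the max, after which the result is the textbook Laplace-mechanism guarantee of~\cite{DMNS06}. The lemma is stated as a named reference precisely because both ingredients are standard.
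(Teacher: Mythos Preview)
Your proposal is correct and matches the paper's approach: the paper provides no proof for this lemma and simply cites~\cite{DMNS06}, treating it as the standard Laplace-mechanism guarantee applied to the $(1/n)$-Lipschitz function $\order f1(\cdot)$. You have merely spelled out the two ingredients (Lipschitzness of the max and the density-ratio bound) that the citation leaves implicit.
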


\begin{lemma}
  \label{lem:max-utility}
  With probability at least $1-\delta$,
  \begin{align*}
    \algM(\alpha,D)
    & \leq \order f1(D) + \frac{1}{n\alpha} \ln \frac{1}{2\delta} .
  \end{align*}
\end{lemma}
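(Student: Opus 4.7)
The plan is to apply the standard upper-tail bound for the Laplace distribution directly to the noise term $Z$ in $\algM$. Recall that $\algM(\alpha,D) = \order f1(D) + Z/n$ with $Z \sim \Lap(1/\alpha)$, so the claim is equivalent to a one-sided tail bound on $Z$.

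First I would record the density of $\Lap(1/\alpha)$, namely $p(z) = (\alpha/2) e^{-\alpha |z|}$, and compute $\Pr(Z > t)$ for $t \geq 0$ by a single integration, obtaining $\Pr(Z > t) = \tfrac{1}{2} e^{-\alpha t}$. Setting this equal to $\delta$ and solving gives the threshold $t_\delta := \tfrac{1}{\alpha} \ln \tfrac{1}{2\delta}$.

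Next I would plug $t_\delta$ back in: with probability at least $1 - \delta$ we have $Z \leq \tfrac{1}{\alpha} \ln \tfrac{1}{2\delta}$, hence $Z/n \leq \tfrac{1}{n\alpha} \ln \tfrac{1}{2\delta}$. Adding $\order f1(D)$ to both sides yields the stated bound on $\algM(\alpha,D)$.

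There is no real obstacle here — the statement is a textbook Laplace tail inequality applied to the output of the Laplace mechanism, and the proof reduces to one line of calculus. The only mild care needed is to ensure that the tail bound is written one-sided (since only the upper tail is relevant for upper-bounding $\algM(\alpha,D)$), which is why the constant inside the logarithm is $1/(2\delta)$ rather than $1/\delta$.
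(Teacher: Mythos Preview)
Your proposal is correct and matches the paper's approach exactly: the paper's proof is the single line ``This follows from the tail properties of the Laplace distribution,'' and you have simply unpacked that line by computing the one-sided tail $\Pr(Z>t)=\tfrac12 e^{-\alpha t}$ for $Z\sim\Lap(1/\alpha)$ and inverting it.
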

\begin{proof}
  This follows from the tail properties of the Laplace distribution.
\end{proof}

\subsection{Certifying the Margin Condition}

The second part of Algorithm~\ref{alg:aa} is an application of the ``sparse
vector technique'' to certify the margin condition; it is detailed in
Algorithm~\ref{alg:svt}.

\begin{algorithm}[t]
  \caption{$\algS(\alpha,m,\theta_1,\theta_2,\dotsc,\theta_{K-1},D)$}
  \label{alg:svt}
  \begin{algorithmic}[1]
    \renewcommand{\algorithmicrequire}{\textbf{input}}
    \renewcommand{\algorithmicensure}{\textbf{output}}

    \REQUIRE Privacy parameter $\alpha > 0$, max estimate $m \in \bbR$,
    thresholds $\theta_1, \theta_2, \dotsc, \theta_{K-1} \in \bbR$,
    database $D \in \cX^n$.

    \ENSURE Rank $r \in \{1,2,\dotsc,K\}$.

    \STATE Draw $G \sim \Lap(2/\alpha)$ and $Z_1, Z_2, \dotsc, Z_{K-1} \stackrel{\text{iid}}{\sim} \Lap(4/\alpha)$

    \FOR{$r=1,2,\dotsc,K-1$}

      \IF{$m - \order f{r+1}(D) > (Z_r + G) / n + \theta_r$}

        \RETURN $r$.

      \ENDIF

    \ENDFOR

    \RETURN $K$.

  \end{algorithmic}
\end{algorithm}

\begin{lemma}
  \label{lem:svt-privacy}
  For any $m,\theta_1,\theta_2,\dotsc,\theta_{K-1} \in \bbR$,
  $\algS(\alpha,m,\theta_1,\theta_2,\dotsc,\theta_{K-1},\cdot)$ is
  $\alpha$-differentially private.
\end{lemma}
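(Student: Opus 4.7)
The proof follows the standard sparse vector technique analysis, applied to the queries whose output is $m - \order{f}{r+1}(D)$. The fixed inputs $m, \theta_1, \ldots, \theta_{K-1}$ are data-independent, so the only data-dependent quantity in $\algS$ is the sequence of order statistics. First I would verify that $\order{f}{r+1}(\cdot)$ is $(1/n)$-Lipschitz in $D$: if $D, D'$ are neighbors, every $f(i, \cdot)$ changes by at most $1/n$, and the $r$-th largest entry of a finite list of reals is $1$-Lipschitz under coordinate-wise perturbation (standard exchange argument on ranks). Hence the scaled query $a_r(D) := n(m - \order{f}{r+1}(D)) - n\theta_r$ has sensitivity $1$, and the while-loop test at iteration $r$ rewrites as $a_r(D) - G > Z_r$. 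Suppressing fixed arguments, write $\algS(D)$ for the output.

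Fix neighbors $D, D'$ and a target output $r^* \in \{1, \ldots, K\}$. I would express
\[
  \Pr[\algS(D) = r^*] = \int p_G(g) \cdot \prod_{r < r^*} \Pr(Z_r \geq a_r(D) - g) \cdot q_{r^*}(D, g) \, dg,
\]
where $q_{r^*}(D, g) := \Pr(Z_{r^*} < a_{r^*}(D) - g)$ when $r^* \leq K-1$, and $q_K(D, g) := 1$. The crux is the change of variable $g \mapsto g' - 1$ in this outer integral. The substitution (a) costs a factor $e^{\alpha/2}$ in the density of $G \sim \Lap(2/\alpha)$, since the shift is $1$; (b) is free for each ``fail'' event at index $r < r^*$, because the $1$-sensitivity of $a_r$ gives $a_r(D) - g' + 1 \geq a_r(D') - g'$, hence $\Pr(Z_r \geq a_r(D) - g' + 1) \leq \Pr(Z_r \geq a_r(D') - g')$; and (c) when $r^* \leq K-1$, incurs one additional factor of $e^{\alpha/2}$ for the ``succeed'' event at $r^*$, since $a_{r^*}(D) - g' + 1 \leq a_{r^*}(D') - g' + 2$ and a shift of $2$ on $Z_{r^*} \sim \Lap(4/\alpha)$ distorts its density by at most $e^{2\alpha/4} = e^{\alpha/2}$. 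Multiplying these factors yields $\Pr[\algS(D) = r^*] \leq e^\alpha \Pr[\algS(D') = r^*]$ for $r^* \leq K-1$, and the easier bound $\Pr[\algS(D) = K] \leq e^{\alpha/2} \Pr[\algS(D') = K]$ when $r^* = K$. Summing over $r^*$ in any event $S \subseteq \{1, \ldots, K\}$ then gives $\alpha$-differential privacy.

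The only real obstacle is bookkeeping the inequality directions and shift amounts. The asymmetry between the ``fail'' events (made \emph{easier} to satisfy by raising $G$) and the single ``succeed'' event (made \emph{harder}, and so requiring an offsetting shift on $Z_{r^*}$) is the signature feature of SVT: only one extra $e^{\alpha/2}$ factor is paid for the succeed event regardless of $K$, which is exactly why the noise scales $2/\alpha$ on $G$ and $4/\alpha$ on the $Z_r$ are calibrated to split the $\alpha$ budget into two halves.
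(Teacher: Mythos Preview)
Your proposal is correct and follows essentially the same argument as the paper: define the $(1/n)$-Lipschitz queries $m - \order f{r+1}(D) - \theta_r$, integrate over the realization $g$ of $G$, shift $g$ by one unit to absorb all the ``fail'' inequalities for free (costing $e^{\alpha/2}$ on the Laplace density of $G$), and then pay a second $e^{\alpha/2}$ via a shift of $2$ on $Z_{r^*}$ for the single ``succeed'' event. Your explicit verification that $\order f{r+1}(\cdot)$ is $(1/n)$-Lipschitz and your separate (sharper) treatment of the $r^*=K$ case are small additions, but the structure and bookkeeping match the paper's proof.
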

\begin{proof}
  \newcommand{\PrG}{\ensuremath{{\Pr}_{|G}}}
This is an application of the sparse vector technique from~\cite{SVT}
that halts as soon as the first ``query'' is answered positively.
We give the privacy analysis for completeness.
For clarity, we suppress the dependence of $\algS$ on all inputs
except $D$, and define $\order F{r+1} := m - \order f{r+1} -
\theta_r$, which inherits the $(1/n)$-Lipschitz property from $\order
f{r+1}$.

Pick any neighboring datasets $D$ and $D'$, and pick any $\ell \in
\{1,2,\dotsc,K\}$.
We use the notation $\PrG(\cdot)$ for conditional probabilities where
the value of $G$ is fixed, so $\Pr(\cdot) = \bbE( \PrG(\cdot) )$,
where the expectation is taken with respect to $G$.
Observe that
\begin{equation}
  \label{eq:condprob}
  \PrG(\algS(D) = \ell)
  =
  \PrG(\algS(D) \leq \ell | \algS(D) > \ell-1)
  \prod_{r=1}^{\ell-1}
  \PrG(\algS(D) > r | \algS(D) > r-1)
  .
\end{equation}
From the definition of \algS and $\order F{r+1}$,
\begin{align*}
  \PrG(\algS(D) > r | \algS(D) > r-1)
  & = \PrG\parens{ \order F{r+1}(D) \leq \frac{Z_r + G}n }
  \quad \forall r \in \{1,2,\dotsc,\ell-1\} ,
\end{align*}
and
\begin{align*}
  \PrG(\algS(D) \leq \ell | \algS(D) > \ell-1)
  & = \PrG\parens{ \order F{\ell+1}(D) > \frac{Z_\ell + G}n }
  .
\end{align*}
Write $\vZ_{1:\ell-1} := (Z_1,Z_2,\dotsc,Z_{\ell-1})$, and define
for any $g \in \bbR$,
\[
  \cZ_g(D) := \braces{
    \vz \in \bbR^{\ell-1} :
    \order F{r+1}(D) \leq \frac{z_r + g}n
    \quad \forall r \in \{1,2,\dotsc,\ell-1\}
  }
  ,
\]
so that
\begin{align*}
  \prod_{r=1}^{\ell-1} \PrG(\algS(D) > r | \algS(D) > r-1)
  & = \prod_{r=1}^{\ell-1}
  \PrG\parens{ \order F{r+1}(D) \leq \frac{Z_r + G}n }
  \\
  & = \PrG\parens{ \vZ_{1:\ell-1} \in \cZ_G(D) } .
\end{align*}
Hence, substituting into \eqref{eq:condprob}, we have
\[
  \PrG(\algS(D) = \ell)
  =
  \PrG\parens{ \order F{\ell+1}(D) > \frac{Z_\ell + G}{n} }
  \PrG(\vZ_{1:\ell-1} \in \cZ_G(D))
  .
\]
Letting $p$ denote the density of $G$, we have the following chain of
inequalities:
\begin{align}
  \lefteqn{ \Pr(\algS(D) = \ell) = \bbE(\PrG(\algS(D) = \ell)) }
  \nonumber \\
  & = \int_{-\infty}^\infty
  \PrG\parens{ \order F{\ell+1}(D) > \frac{Z_\ell + g}{n} }
  \PrG(\vZ_{1:\ell-1} \in \cZ_g(D))
  p(g) dg
  \nonumber \\
  & \leq \exp(\alpha/2) \int_{-\infty}^\infty
  \PrG\parens{ \order F{\ell+1}(D) > \frac{Z_\ell + g}{n} }
  \PrG(\vZ_{1:\ell-1} \in \cZ_g(D))
  p(g+1) dg
  \label{eq:change-density} \\
  & = \exp(\alpha/2) \int_{-\infty}^\infty
  \PrG\parens{ \order F{\ell+1}(D) > \frac{Z_\ell + g-1}{n} }
  \PrG(\vZ_{1:\ell-1} \in \cZ_{g-1}(D))
  p(g) dg
  \nonumber \\
  & \leq \exp(\alpha/2) \int_{-\infty}^\infty
  \PrG\parens{ \order F{\ell+1}(D) > \frac{Z_\ell + g-1}{n} }
  \PrG(\vZ_{1:\ell-1} \in \cZ_g(D'))
  p(g) dg
  \label{eq:change-rterms} \\
  & \leq \exp(\alpha) \int_{-\infty}^\infty
  \PrG\parens{ \order F{\ell+1}(D') > \frac{Z_\ell + g}{n} }
  \PrG(\vZ_{1:\ell-1} \in \cZ_g(D'))
  p(g) dg
  \label{eq:change-lterm} \\
  & = \exp(\alpha) \Pr(\algS(D') = \ell)
  \nonumber .
\end{align}
To prove~\eqref{eq:change-density}, we use the fact $p(g) \leq
\exp(\alpha/2) p(g+1)$ since $p$ is the Laplace density with scale
parameter $\alpha/2$.
To prove~\eqref{eq:change-rterms}, observe that for all $r \in
\{1,2,\dotsc,\ell-1\}$,
the $(1/n)$-Lipschitz property of $\order F{r+1}$ implies
\[
  \order F{r+1}(D) \leq \frac{Z_r + g - 1}{n} \ \Longrightarrow \
  \order F{r+1}(D') \leq \frac{Z_r + g}{n}
  .
\]
This, in turn, implies $\cZ_{g-1}(D) \subseteq \cZ_g(D')$,
so~\eqref{eq:change-rterms} follows.
To prove~\eqref{eq:change-lterm}, we use the following.
Observe that
\begin{equation*}
  \order F{\ell+1}(D) > \frac{Z_\ell + g - 1}{n} \ \Longrightarrow \
  \order F{\ell+1}(D') > \frac{Z_\ell + g - 2}{n}
\end{equation*}
by the $(1/n)$-Lipschitz property of $\order F{\ell+1}$.
Therefore
\begin{align*}
  \PrG\parens{ \order F{\ell+1}(D) > \frac{Z_\ell + g-1}{n} }
  & \leq
  \PrG\parens{ \order F{\ell+1}(D') > \frac{Z_\ell + g-2}{n} }
  \\
  & \leq
  \exp(\alpha/2)
  \PrG\parens{ \order F{\ell+1}(D') > \frac{Z_\ell + g}{n} }
\end{align*}
where we use the fact that $Z_\ell \sim \Lap(\alpha/4)$ for the last
step, so~\eqref{eq:change-lterm} follows.
\end{proof}

\begin{lemma}
  \label{lem:svt-utility}
  With probability at least $1-\delta$, if
  $\algS(\alpha,m,\theta_1,\theta_2,\dotsc,\theta_{K-1},D) = r$ then
  \[
    m - \order f{r+1}(D)
    > \theta_r -\frac2{n\alpha} \ln \frac{1}{\delta}
    -\frac4{n\alpha} \ln \frac{r(r+1)}{\delta}
    .
  \]
\end{lemma}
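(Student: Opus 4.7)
The plan is to establish the utility guarantee via a standard union-bound tail analysis on the Laplace noises that appear in $\algS$, and then substitute into the condition that caused $\algS$ to return $r$ in the first place.

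First, I would unpack the return condition. By inspection of $\algS$, whenever the algorithm returns a value $r < K$, the corresponding guard was triggered, i.e.\ $m - \order f{r+1}(D) > (Z_r + G)/n + \theta_r$. (For $r = K$ the statement is vacuous because $\order f{K+1}(D) = -\infty$ by convention, so the inequality holds trivially.) Thus the only thing to control is the possibly-large negative value of $(Z_r + G)/n$ for the particular index $r$ that is returned. Since that index is data-dependent, I would aim to control all $Z_r$ simultaneously via a union bound.

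Next, I would apply the standard one-sided Laplace tail bound: for $X \sim \Lap(b)$ and $t > 0$, $\Pr(X < -t) = \tfrac{1}{2} e^{-t/b}$. Applied to $G \sim \Lap(2/\alpha)$, this gives $G \geq -\tfrac{2}{\alpha} \ln(1/\delta)$ with probability at least $1 - \delta/2$. Applied to each $Z_r \sim \Lap(4/\alpha)$ with target failure probability $\delta_r := \delta/(2r(r+1))$, this gives $Z_r \geq -\tfrac{4}{\alpha}\ln(r(r+1)/\delta)$ with probability at least $1 - \delta_r$. A union bound over $r = 1,\dotsc,K-1$ then yields the simultaneous guarantee, using the telescoping identity
\[
\sum_{r=1}^{K-1} \frac{1}{r(r+1)} \;=\; 1 - \frac{1}{K} \;\leq\; 1,
\]
so the total failure probability across all the $Z_r$ bounds is at most $\delta/2$.

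Finally, I would combine. On the intersection of the above two events, which holds with probability at least $1 - \delta$, we have for \emph{every} $r \in \{1,\dotsc,K-1\}$ the inequality
\[
\frac{Z_r + G}{n} \;\geq\; -\frac{2}{n\alpha}\ln\frac{1}{\delta} - \frac{4}{n\alpha}\ln\frac{r(r+1)}{\delta}.
\]
If $\algS$ returns some specific $r$, substituting this lower bound into the triggering inequality $m - \order f{r+1}(D) > (Z_r+G)/n + \theta_r$ immediately yields the claimed bound. There is no real obstacle here; the only subtlety is that the index $r$ is chosen adaptively based on the noises, which is why I need the simultaneous bound on all $Z_r$ (not just the returned one), and this is exactly what the $1/(r(r+1))$ allocation of failure probability is engineered to provide.
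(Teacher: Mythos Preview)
Your proposal is correct and follows essentially the same approach as the paper: a union bound over one-sided Laplace tail events for $G$ and all the $Z_r$, with the $1/(r(r+1))$ allocation making the total failure probability at most $\delta$, then substitution into the triggering inequality. Your write-up is in fact slightly more detailed than the paper's (explicitly handling the $r=K$ case and writing out the telescoping sum), but the argument is the same.
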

\begin{proof}
  Using the tail bound for the Laplace distribution,
  \begin{equation*}
    \Pr\parens{ G < -\frac2{\alpha} \ln \frac{1}{\delta} }
    \leq \frac{\delta}{2}
  \end{equation*}
  and
  \begin{equation*}
    \Pr\parens{ Z_r < -\frac4{\alpha} \ln \frac{r(r+1)}{\delta} }
    \leq \frac{\delta}{2r(r+1)}
  \end{equation*}
  for each $r \in \{1,2,\dotsc,K-1\}$.
  Therefore, by a union bound, with probability at least $1-\delta$,
  \begin{equation*}
    G \geq -\frac2{\alpha} \ln \frac{1}{\delta} \quad\text{and}\quad
    Z_r \geq -\frac4{\alpha} \ln \frac{r(r+1)}{\delta}
    \ \forall r \in \{1,2,\dotsc,K-1\} .
  \end{equation*}
  The claim follows.
\end{proof}

\subsection{Restricted Exponential Mechanism}

The third part of Algorithm~\ref{alg:aa} uses the exponential
mechanism on the top $\ell$ items to select one of these items; it is
detailed in Algorithm~\ref{alg:argmax}.

\begin{algorithm}[t]
  \caption{$\algA(\alpha,\ell,D)$}
  \label{alg:argmax}
  \begin{algorithmic}[1]
    \renewcommand{\algorithmicrequire}{\textbf{input}}
    \renewcommand{\algorithmicensure}{\textbf{output}}

    \REQUIRE Privacy parameter $\alpha > 0$, number of items $\ell > 0$,
    database $D \in \cX^n$.

    \ENSURE Item $I \in \cU$.

    \STATE Let $\cU_\ell$ be the set of $\ell$ items in $\cU$ with
    highest $f(i,D)$ value, ties broken arbitrarily.

    \STATE Draw $I \sim \vp$ where $p_i \propto \ind{i \in \cU_\ell}
    \exp(n\alpha f(i,D)/2)$.

    \RETURN $I$.

  \end{algorithmic}
\end{algorithm}

\begin{lemma}
  \label{lem:expmech}
  Assume $D$ satisfies the $(\ell,\gamma)$-margin condition with
  \[
    \gamma \geq \frac2n \parens{ 1 + \frac{\ln(\ell/\beta)}{\alpha} }
    .
  \]
  Then for any neighbor $D' \in \cX^n$ of $D$, and any $S \subseteq
  \cU$,
  \[
    \Pr(\algA(\alpha,D) \in S)
    \leq \exp(\alpha) \cdot \Pr(\algA(\alpha,D') \in S) + \beta
    .
  \]
\end{lemma}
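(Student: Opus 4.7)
The plan is to directly analyze how the restricted exponential mechanism behaves when the top-$\ell$ set changes between neighbors $D$ and $D'$. Let $\cU_\ell, \cU_\ell'$ denote the top-$\ell$ sets for $D, D'$; let $A := \cU_\ell \cap \cU_\ell'$, $B := \cU_\ell \setminus \cU_\ell'$, and $B' := \cU_\ell' \setminus \cU_\ell$, so $|B|=|B'|$. Writing $q_i := \exp(n\alpha f(i,D)/2)$, $q'_i := \exp(n\alpha f(i,D')/2)$, and the corresponding normalizers $N_D, N_{D'}$, I would split
\[
  \Pr(\algA(\alpha,D)\in S)
  = \underbrace{\sum_{i \in S\cap A} q_i/N_D}_{(\mathrm{I})}
  + \underbrace{\sum_{i \in S\cap B} q_i/N_D}_{(\mathrm{II})}
\]
and prove separately that $(\mathrm{I}) \leq e^\alpha \Pr(\algA(\alpha,D') \in S)$ and $(\mathrm{II}) \leq \beta$.

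For (II), the key step is an indirect bound on $f(i,D)$ for $i \in B$ using any $j \in B'$ (which exists because $B\neq\emptyset$ forces $|B'|=|B|>0$). Since $i\notin\cU_\ell'$ while $j\in\cU_\ell'$, one has $f(i,D') \leq f(j,D')$; the $(1/n)$-Lipschitz property then yields $f(i,D) \leq f(j,D) + 2/n$. But $j\notin\cU_\ell$, so $f(j,D)\leq \order f{\ell+1}(D)$, and the $(\ell,\gamma)$-margin condition gives $f(i,D) < \order f1(D) - \gamma + 2/n$. Lower-bounding $N_D \geq \exp(n\alpha \order f1(D)/2)$ and plugging in the assumed lower bound on $\gamma$ produces $\sum_{i\in B} q_i / N_D < \ell e^{\alpha}\exp(-n\alpha\gamma/2) \leq \beta$.

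For (I), the key sub-claim is $N_{D'} \leq e^{\alpha/2} N_D$. Lipschitz continuity gives $\sum_{j\in A} q'_j \leq e^{\alpha/2}\sum_{j\in A} q_j$. For the $B'$ contribution I would use a pairing argument: every $j \in B'$ satisfies $f(j,D)\leq \order f{\ell+1}(D)\leq f(i,D)$ for every $i\in B$, so $|B|=|B'|$ yields $\sum_{j\in B'} q_j \leq \sum_{i\in B} q_i$ and hence $\sum_{j\in B'} q'_j \leq e^{\alpha/2}\sum_{i\in B} q_i$ by Lipschitz. Adding the two pieces gives $N_{D'} \leq e^{\alpha/2} N_D$. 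A standard two-step chain (Lipschitz on the numerator, the sub-claim on the denominator) then bounds (I) by $e^\alpha \sum_{i \in S\cap A} q'_i / N_{D'} \leq e^\alpha \Pr(\algA(\alpha,D') \in S)$, completing the proof after combining with (II).

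The main obstacle is bounding (II): an item $i \in B$ is in the top-$\ell$ under $D$ and so in isolation could have an $f$-value close to $\order f1(D)$, making $q_i/N_D$ large. The detour through a ``partner'' $j \in B'$, whose $f$-value is directly controlled by the margin condition, is what makes the bound go through; one essentially uses $B$ and $B'$ in tandem rather than analyzing $B$ alone. Some care is needed regarding tie-breaking at rank $\ell$, but the inequality $f(j,D)\leq \order f{\ell+1}(D)$ for any $j\notin\cU_\ell$ holds in all cases (with equality possible under ties), so the argument is tie-robust.
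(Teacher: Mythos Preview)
Your proof is correct. Both your approach and the paper's partition according to whether an item lies in both top-$\ell$ sets or only one, and both hinge on (a) a normalizer comparison and (b) showing that items in the symmetric difference carry negligible mass.

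There are two differences worth noting. First, for the normalizer bound the paper uses the cleaner observation that each order statistic $\order f r(\cdot)$ is itself $(1/n)$-Lipschitz, so $\sum_{r=1}^\ell \exp(n\alpha \order fr(D')/2) \geq e^{-\alpha/2}\sum_{r=1}^\ell \exp(n\alpha \order fr(D)/2)$ term-by-term; your pairing argument reaches the same conclusion but is more elaborate than necessary. Second, for part~(II) the detour through a partner $j\in B'$ is also more than you need: since $i \in B$ already means $i\notin \cU_\ell'$, you have $f(i,D') \leq \order f{\ell+1}(D')$ directly, whence $f(i,D) \leq \order f{\ell+1}(D') + 1/n \leq \order f{\ell+1}(D) + 2/n < \order f1(D) - \gamma + 2/n$ without ever invoking $j$. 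This gives the same bound $(\mathrm{II}) < \beta$.

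One further remark: the paper's written proof actually bounds $\Pr(\algA(\alpha,\ell,D')=i)$ in terms of $\Pr(\algA(\alpha,\ell,D)=i)$ for $i\in H_{D'}$, which establishes the inequality with $D$ and $D'$ in the \emph{reverse} roles from the lemma statement. In that direction the ``bad'' items lie in $H_{D'}\setminus H_D$, so the margin condition on $D$ applies to them directly (no $+2/n$ slack). Your argument, by contrast, proves the direction as stated---which is the one actually invoked in the paper's overall privacy composition---and the extra $2/n$ you incur is exactly absorbed by the ``$1+$'' term in the assumed lower bound on $\gamma$.
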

\begin{proof}
  For any $r \in \{1,2,\dotsc,K\}$ and dataset $\tilde{D} \in \cX^n$,
  let $H_{\tilde{D}} \subseteq \cU$ denote the $r$ items of highest
  $f(\cdot,\tilde{D})$ value (ties broken arbitrarily).
  (In Algorithm~\ref{alg:argmax}, we have $\cU_\ell = H_D$.)
  It suffices to show that
  \[
    \Pr(\algA(\alpha,\ell,D') = i)
    \leq \max\braces{
      \Pr(\algA(\alpha,\ell,D) = i) \exp(\alpha) ,\ \beta/\ell
    }
    , \quad \forall i \in H_{D'}
    .
  \]
  This is because $\Pr(\algA(\alpha,\ell,D') \notin H_{D'}) = 0$ and
  $|H_{D'}| = \ell$.

  Fix any $i \in H_{D'}$.
  Because $f(j,\cdot)$ is $(1/n)$-Lipschitz for every $j \in \cU$, so
  is $\order fr(\cdot)$ for every $r \in [K]$.
  Therefore
  \[
    \sum_{r=1}^\ell
    \exp\parens{\frac{n\alpha}{2} \order fr(D')}
    \geq
    \sum_{r=1}^\ell
    \exp\parens{\frac{n\alpha}{2} \order fr(D)}
    \exp(-\alpha/2)
    .
  \]
  Also by the $(1/n)$-Lipschitz property,
  \[
    \exp\parens{\frac{n\alpha}{2} f(i,D')}
    \leq
    \exp\parens{\frac{n\alpha}{2} f(i,D)}
    \exp(\alpha/2)
    .
  \]
  Therefore, combining the two displayed equations above gives
  \begin{equation}
    \Pr(\algA(\alpha,\ell,D') = i)
    = \frac{\exp\parens{\frac{n\alpha}{2}f(i,D')}}
    {\sum_{r=1}^\ell \exp\parens{\frac{n\alpha}{2}\order fr(D')}}
    \leq \frac{\exp\parens{\frac{n\alpha}{2}f(i,D)}}
    {\sum_{r=1}^\ell \exp\parens{\frac{n\alpha}{2}\order fr(D)}}
    \exp(\alpha)
    .
    \label{eq:expmech}
  \end{equation}
  If $i \in H_D$, then~\eqref{eq:expmech} reads
  \[
    \Pr(\algA(\alpha,\ell,D') = i) \leq \Pr(\algA(\alpha,\ell,D) = i)\exp(\alpha)
    .
  \]
  If $i \notin H_D$, then the assumption that $D$ satisfies the
  $(\ell,\gamma)$-margin condition implies
  \[
    f(i,D) \leq \order f1(D) - \gamma
    ;
  \]
  so combining the above inequality with~\eqref{eq:expmech}, as well
  as the assumption $\gamma \geq (2/n)(1+\ln(\ell/\beta)/\alpha)$,
  gives
  \[
    \Pr(\algA(\alpha,\ell,D') = i) \leq
    \frac
    {\exp\parens{\frac{n\alpha}{2}\parens{\order f1(D) - \gamma}}}
    {\exp\parens{\frac{n\alpha}{2}\order f1(D)}}
    \exp(\alpha)
    \leq \beta/\ell
    .
    \qedhere
  \]
\end{proof}

\subsection{Privacy of Algorithm~\ref{alg:aa}}
\newcommand\AND{\ensuremath{\,\wedge\,}}
\newcommand\GIVEN{\ensuremath{\,\vert\,}}

For clarity, we suppress the privacy parameter inputs to the
algorithms.
By standard composition results for differential privacy~\cite{DMNS06},
Lemma~\ref{lem:max-privacy}, and Lemma~\ref{lem:svt-privacy}, the release
of $\algM(D)$ and $\algS(\algM(D),D)$ is $(2\alpha/3)$-differentially
private.
Define the shorthand $\algMS(D) := (\algM(D),\algS(\algM(D),D))$, and
let $\mu_D$ denote the corresponding probability measure over the
range of $\algMS(D)$.

For a dataset $D \in \cX^n$, let $\cV_D$ be set of $(\tilde{m},\tilde\ell)$
pairs (i.e., possible outputs of $\algMS$) such that
\[
  \tilde{m} \leq \order f1(D) + \frac{3}{n\alpha} \ln \frac{3}{2\delta}
  \quad\text{and}\quad
  \tilde{m} - \order f{\tilde\ell+1}(D)
  > \order T{\tilde\ell}
  -\frac{12}{n\alpha} \ln \frac{3\tilde\ell(\tilde\ell+1)}{\delta}
  -\frac{6}{n\alpha} \ln \frac3\delta
  .
\]

If $(m,\ell) \in \cV_D$, then the values of $\order T\ell$ and $\order
t\ell$ certify that $D$ satisfies the $(\ell,\order t\ell)$-margin
condition.
Lemma~\ref{lem:max-utility} and Lemma~\ref{lem:svt-utility} imply that
\[
  \mu_D(\cV_D) \geq 1-\frac{2\delta}3 .
\]
Also, observe that if $\beta := \delta \exp(-2\alpha/3) / 3$, then
\[
  \order t\ell
  = \frac{2}{n} \parens{
    1 + \frac{\ln(\ell/\beta)}{\alpha/3}
  }
  .
\]
Therefore, for any neighbor $D' \in \cX^n$ of $D$, and any $S
\subseteq \cU$,
\begin{align*}
  \Pr( \algAA(D) \in S )
  & = \int
  \Pr( \algA(\ell,D) \in S \GIVEN \algMS(D) = (m,\ell))
  d\mu_D
  \\
  & \leq \int_{\cV_D}
  \Pr( \algA(\ell,D) \in S \GIVEN \algMS(D) = (m,\ell))
  d\mu_D
  + \frac{2\delta}3
  \\
  & \leq \int_{\cV_D}
  \parens{
    e^{\alpha/3}
    \Pr( \algA(\ell,D') \in S \GIVEN \algMS(D) = (m,\ell))
    + \beta
  }
  e^{2\alpha/3}
  d\mu_{D'}
  + \frac{2\delta}3
  \\
  & = \int_{\cV_D}
  \parens{
    e^{\alpha/3}
    \Pr( \algA(\ell,D') \in S \GIVEN \algMS(D') = (m,\ell))
    + \frac{\delta e^{-2\alpha/3}}3
  }
  e^{2\alpha/3}
  d\mu_{D'}
  + \frac{2\delta}3
  \\
  & \leq \int
  \parens{
    e^{\alpha/3}
    \Pr( \algA(\ell,D') \in S \GIVEN \algMS(D') = (m,\ell))
    + \frac{\delta e^{-2\alpha/3}}3
  }
  e^{2\alpha/3}
  d\mu_{D'}
  + \frac{2\delta}3
  \\
  & =
  e^{\alpha}
  \Pr( \algAA(D') \in S )
  + \delta
  .
\end{align*}
Above, the second inequality follows from Lemma~\ref{lem:expmech} and
the $(2\alpha/3)$-differential privacy of $\algMS$.
\qed

\section{Utility Analysis}
\label{app:utility}

\begin{proof}[Proof of Theorem~\ref{thm:utility}]
  Using tail bounds for the Laplace distribution, it follows that with
  probability at least $1-\eta/2$,
  \begin{equation*}
    Z \geq -\frac{3}{\alpha} \ln \frac{3}{\eta} , \quad
    G \leq \frac{6}{\alpha} \ln \frac{3}{\eta} , \quad
    Z_{\ell^*} \leq \frac{12}{\alpha} \ln \frac{3}{\eta}
    .
  \end{equation*}
  In this event, the assumption that $D$ satisfies the
  $(\ell^*,\gamma^*)$-margin condition implies that
  \[
    \parens{ \order f1(D) + Z/n }
    - \order f{\ell^*+1}(D) > (Z_{\ell^*} + G) / n + \order T{\ell^*}
    ,
  \]
  so the while-loop terminates with $\ell \leq \ell^*$.
  Also, the probability distribution $\vp$ in Step~\ref{step:p} of
  Algorithm~\ref{alg:aa} assigns probability mass at most $\eta/2$ to
  the set of items $i$ with
  \[
    f(i,D) \leq \order f1(D) - \frac{6\ln(2\ell/\eta)}{n\alpha}
    .
  \]
  Therefore, by a union bound, the item $I$ returned by
  Algorithm~\ref{alg:aa} satisfies
  \[
    f(I,D) > \order f1(D) - \frac{6\ln(2\ell^*/\eta)}{n\alpha}
  \]
  with probability at least $1-\eta$.
\end{proof}

\section{Proofs of Lower Bounds}
\label{app:proofs}

\begin{proof}[Proof of Theorem~\ref{thm:lb2}]
  We construct the private maximization problem as follows.
  Let the domain $\cX := 2^\cU$ (subsets of items), and define $f :
  \cU \times \cX^n \to \bbR$ by
  \begin{align*}
    f(i,D)
    & := \frac1n \sum_{s=1}^n \ind{i \in D_s}
    .
  \end{align*}
  In other words, the function $f(i,\cdot)$ is the fraction of entries
  containing $i$.
  It is easy to see that $f(i,\cdot)$ is $(1/n)$-Lipschitz for all $i
  \in \cU$.

  Let $m := \min\{n/2, \log((\ell-1)/2)/\alpha\}$.
  We define a collection of $\ell$ datasets $D^1, D^2, \dotsc, D^\ell
  \in \cX^n$ with the following properties:
  \begin{enumerate}
    \item
      For each $i$, the first $n/2$ entries of $D^i$ are equal to
      $[\ell] := \{1,2,\dotsc,\ell\}$, the next $n/2 - m$ are equal of
      $D^i$ are equal to $\emptyset$, and the last $m$ entries of
      $D^i$ are equal to $\{i\}$.
      Therefore
      \[
        f(j,D^i) =
        \begin{cases}
          0 & \text{if $j \notin [\ell]$} , \\
          \frac12 & \text{if $j \in [\ell]\setminus\{i\}$} , \\
          \frac12 + \frac{m}{n} & \text{if $j = i$} ,
        \end{cases}
      \]
      so $f(i,D^i) = \order f1(D^i)$ and $D^i$ satisfies the
      $(\ell,m/n)$-margin condition.

    \item
      For each $i \neq j$, the datasets $D^i$ and $D^j$ differ only in
      (the last) $m$ entries.

  \end{enumerate}

  Let $\cA$ be $(\alpha,\delta)$-approximate differentially private.
  Assume for sake of contradiction that
  \[
    \Pr\parens{ f(\cA(D^i), D^i) > \order f1(D^i) - \frac{m}{n} }
    \geq \frac{1}{2}
  \]
  for all $i \in [\ell]$.
  Since only $i$ satisfies $f(i,D^i) > \order f1(D^i) - m/n$, this is
  the same as $\Pr( \cA(D^i) = i ) \geq 1/2$ for all $i \in [\ell]$.
  This then implies the following chain of inequalities leading to a
  contradiction:
  \begin{align*}
    \frac{1}{2} & > \Pr(\cA(D^i) \neq i) \\
    & \geq \sum_{j \in [\ell]\setminus\{i\}} \Pr(\cA(D^i) = j) \\
    & \geq \sum_{j \in [\ell]\setminus\{i\}} e^{-\alpha m} \Pr(\cA(D^j) =
    j) - \frac{\delta}{1-e^{-\alpha}} \\
    & \geq (\ell-1) \parens{ \frac{e^{-\alpha m}}{2} - \frac{\delta}{1-e^{-\alpha}} }
    \geq \frac12
    .
  \end{align*}
  The first inequality above is by assumption;
  the third inequality follows from Lemma~\ref{lem:adp};
  the fourth inequality again uses the assumption;
  and the final inequality follows by the definition of $m$ and the
  condition on $\delta$.
  Since a contradiction is reached, there must exist some $i \in
  [\ell]$ such that $\Pr( f(\cA(D^i), D^i) > \order f1(D^i) - m/n ) <
  1/2$.
\end{proof}

\begin{lemma}[\cite{CH12}] \label{lem:adp}
  Let $D$ and $D'$ be any two datasets that differ in at most $k$
  entries, and let $\cA$ be any $(\alpha, \delta)$-approximate
  differentially private algorithm with range $\cS$.
  Then, for any $S \subseteq \cS$,
  \[
    \Pr(\cA(D) \in S) \geq e^{-k \alpha} \Pr(\cA(D') \in S) -
    \frac{\delta}{1-e^{-\alpha}}
    .
  \]
\end{lemma}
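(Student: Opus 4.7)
The plan is the standard group-privacy telescoping argument: build a chain of datasets $D = D_0, D_1, \dotsc, D_k = D'$ in which each consecutive pair is a neighbor, apply the $(\alpha,\delta)$-approximate differential privacy guarantee once per link, and bound the accumulated $\delta$ error using a geometric series. Since $D$ and $D'$ differ in at most $k$ entries, such a chain clearly exists (pad with repeats if they differ in fewer than $k$ entries so that each single step changes at most one entry).

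First, for each $i \in \{0,1,\dotsc,k-1\}$, the definition of $(\alpha,\delta)$-approximate differential privacy applied to the neighbors $D_i, D_{i+1}$ gives, for any $S \subseteq \cS$,
\[
  \Pr(\cA(D_{i+1}) \in S) \leq e^\alpha \Pr(\cA(D_i) \in S) + \delta.
\]
Next, I would iterate this inequality from $i=0$ up to $i=k-1$, substituting at each step, to obtain
\[
  \Pr(\cA(D') \in S) \leq e^{k\alpha} \Pr(\cA(D) \in S) + \delta \sum_{j=0}^{k-1} e^{j\alpha}.
\]
Evaluating the geometric sum as $(e^{k\alpha}-1)/(e^\alpha-1)$, dividing through by $e^{k\alpha}$, and rearranging yields
\[
  \Pr(\cA(D) \in S) \geq e^{-k\alpha} \Pr(\cA(D') \in S) - \delta \cdot \frac{1 - e^{-k\alpha}}{e^\alpha - 1}.
\]

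Finally, to match the target error term stated in the lemma, I would observe that $e^\alpha - 1 = e^\alpha(1-e^{-\alpha})$, so the accumulated error equals $\delta(1-e^{-k\alpha})/[e^\alpha(1-e^{-\alpha})]$, which is bounded above by $\delta/(1-e^{-\alpha})$ because $1 - e^{-k\alpha} \leq 1 \leq e^\alpha$. This gives exactly the claimed inequality. The main obstacle is really just bookkeeping: making sure the inequality direction survives the inversion from the forward $(\alpha,\delta)$-DP statement to the claimed lower bound, and that the geometric-sum bound is applied correctly. There is no deeper idea beyond iterating the one-step differential privacy guarantee.
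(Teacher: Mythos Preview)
Your argument is correct and is exactly the standard group-privacy telescoping proof. The paper itself does not prove this lemma---it simply cites it from \cite{CH12}---so there is no proof in the paper to compare against; your write-up would serve as a self-contained justification, and the bookkeeping (direction of the inequality after rearranging, and the bound $(1-e^{-k\alpha})/e^{\alpha}\le 1$ on the geometric-sum factor) is handled correctly.
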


\end{document}